\documentclass{article}

\PassOptionsToPackage{numbers, compress, sort}{natbib}

\usepackage[main, preprint]{neurips_2026}

\usepackage[utf8]{inputenc} 
\usepackage[T1]{fontenc}    
\usepackage{hyperref}       
\usepackage{url}            
\usepackage{booktabs}       
\usepackage{amsfonts}       
\usepackage{nicefrac}       
\usepackage{microtype}      
\usepackage{xcolor}         

\title{NOFE -- Neural Operator Function Embedding}

\author{%
  David S.~Hippocampus\thanks{Use footnote for providing further information
    about author (webpage, alternative address)---\emph{not} for acknowledging
    funding agencies.} \\
  Department of Computer Science\\
  Cranberry-Lemon University\\
  Pittsburgh, PA 15213 \\
  \texttt{hippo@cs.cranberry-lemon.edu} \\
}

\author{
Lars Uebbing$^{1}$, Harald L. Joakimsen$^{1}$, Siyan Chen$^{1}$, Georgios Leontidis$^{1}$,\\
\textbf{Kristoffer K. Wickstrøm$^{1}$, Michael C. Kampffmeyer$^{1,2}$, Sébastien Lefèvre$^{1, 3}$,}\\
\textbf{Arnt-Børre Salberg$^{2}$, Robert Jenssen$^{1,2,4}$}\\
$^{1}$UiT The Arctic University of Norway, $^{2}$Norwegian Computing Center\\
$^{3}$University of South Brittany, $^{4}$University of Copenhagen
}

\usepackage{graphicx} 
\usepackage{amsmath}
\usepackage{amssymb}
\usepackage{bbold}
\usepackage{xcolor}
\usepackage{amsthm}
\usepackage{comment}
\usepackage{multirow}
\usepackage{algorithm}
\usepackage{algpseudocode}

\usepackage{subcaption}
\usepackage{wrapfig}
\usepackage{array} 

\newtheorem{definition}{Definition}
\newtheorem{proposition}{Proposition}

\newtheorem{assumption}{Assumption}

\usepackage[textsize=tiny]{todonotes}

\begin{document}

\maketitle

\begin{abstract}

Most dimensionality reduction methods treat data as discrete point clouds, ignoring the continuous domain structure inherent to many real-world processes. To bridge this gap, we introduce Neural Operator Function Embedding (NOFE), a domain-aware framework for continuous dimensionality reduction. NOFE learns function-to-function mappings via a Graph Kernel Operator, enabling mesh-free evaluation at arbitrary query locations independent of input discretization. We establish NOFE as approximation of sheaf-to-sheaf mappings, generalizing Sheaf Neural Networks to continuous domains.
We evaluate NOFE across different datasets, comparing it against PCA, t-SNE, and UMAP. Our results demonstrate that NOFE significantly outperforms baselines in local structure preservation, achieving a local Stress of 0.111 compared to 0.398 for PCA, 0.773 for t-SNE, and 0.791 for UMAP for the ERA5 climate reanalysis dataset. NOFE also exhibits robust sampling independence, reducing the Patch Stitching Error by up to $20.0\times$ relative to UMAP (59.0 vs. 267.6 under regional normalization) and ensuring consistency across disjoint domain patches. While maintaining competitive global structure preservation (Stress-1: 0.379 vs. PCA's 0.268), NOFE resolves fine-grained structures and produces smooth, consistent embeddings that generalize across varying sample densities, addressing key limitations of discrete reduction methods.

\end{abstract}

\section{Introduction}

In modern data analysis, datasets are increasingly high-dimensional, containing hundreds or even thousands of features. While such data is rich in information, it introduces significant challenges, including high computational costs, difficulties in visualization, and the risk of overfitting in predictive models. Dimensionality reduction addresses these challenges by transforming data into a lower-dimensional representation that preserves essential structure and patterns, enabling more efficient computation and improved model interpretability \cite{ghojogh2023ElementsDimensionalityReductiona, waggoner2021ModernDimensionReduction}. 

Often, the dataset $D=\{y_i\}$ analyzed consists of discrete measurements or samples targeting a continuous underlying process $f:\mathcal{M} \rightarrow \mathbb{R}^{d_f}$ over some domain $\mathcal{M}$, with $y_i = f(x_i)$. These discretizations of continuous data structures could be anything from a time series, pixels of an image, spatial measurements of observables or more. Existing methods, however, rarely account for the continuous nature of the data, nor the domain in which the data lives. Much used techniques like Principal Component Analysis (PCA) \cite{2002PrincipalComponentAnalysis}, t-distributed Stochastic Neighborhood Embedding (t-SNE) \cite{maaten2008VisualizingDataUsing} or Uniform Manifold Approximation and Projection (UMAP) \cite{mcinnes2020UMAPUniformManifold} treat the data purely as a set of discrete points $y_i$, ignoring the smooth functional relationships inherent in the underlying system. This can lead to overclustering, exaggeration of distances between points, and misrepresentation of the true geometry of the data, producing embeddings that distort or obscure meaningful patterns \cite{liu2025AssessingImprovingReliability, bergam2025TSNEExaggeratesClusters}.

\begin{wrapfigure}{r}{0.45\textwidth}
	\centering
	\includegraphics[width=1.05\linewidth,
	 trim=0 0 0 5
	 ]{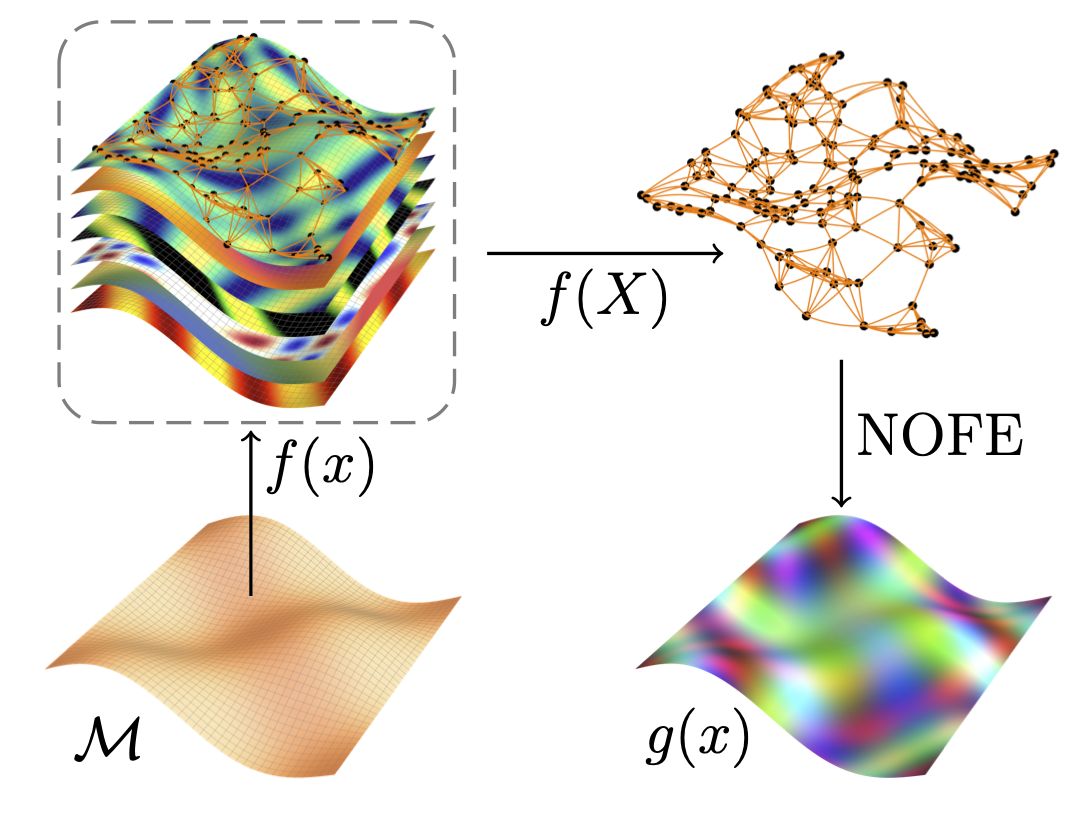}
	\caption{NOFE scheme. For a high-dimensional function $f:\mathcal{M}\rightarrow \mathbb{R}^{d_f}$ a subset of points $X\subset\mathcal{M}$ and their function values $f(X)$ are used to construct a graph, which NOFE maps to a lower dimensional function defined over the same domain $g:\mathcal{M}\rightarrow \mathbb{R}^{d_g}$}
	\label{fig: NOFE scheme}	
    \vspace{-15pt}
\end{wrapfigure}
With the recently emerging concept of Neural Operators (NOs) \cite{li2020NeuralOperatorGraph, lu2021DeepONetLearningNonlinear, li2021FourierNeuralOperator, kovachki2023NeuralOperatorLearning}, we overcome the above mentioned limitations and develop an approach for dimensionality reduction  that is fundamentally different from existing methods as it aims to find embeddings for the underlying function rather than the mere set of observations. We propose Neural Operator Function Embedding (NOFE) --- a formalism to learn operators for continuous, domain aware dimensionality reduction. NOFE uses a Graph Kernel Operator (GKO) \cite{li2020NeuralOperatorGraph}, where the domain $\mathcal{M}$ provides the graph structure and the function $f(x)$ yields node features (see Fig. \ref{fig: NOFE scheme}). This way, the framework can generate a continuous output function $g(x)$ regardless of the input samples.


By reformulating the problem as a sheaf morphism \cite{curry2014SheavesCosheavesApplications}, we build on a rich mathematical foundation, opening new perspectives on dimensionality reduction. We show that GKOs, under constraints of locality, can be seen as an extension of existing Sheaf Neural Networks (SNNs) \cite{hansen2020SheafNeuralNetworks} from cellular to continuous sheaf structures. This way, we establish the NOFE operator and therefore the dimensionality reduction problem as a sheaf-to-sheaf mapping. 


 We evaluate NOFE across different datasets, comparing it against relevant methods such as PCA, t-SNE, and UMAP. Our results demonstrate that NOFE outperforms baselines on metrics related to local structure preservation, patch stitching error and consistency, while maintaining competitive global structure preservation. NOFE resolves fine-grained structures and produces smooth, consistent embeddings that generalize, addressing key limitations of discrete reduction methods.

In summary, our key contributions are: 
\begin{enumerate}
	\item We propose a new perspective on dimensionality reduction by formulating it as a sheaf morphism that embeds underlying continuous structure rather than the discrete point samples. 
	\item We introduce NOFE as a NO-based dimensionality reduction method, respecting domain-dependence and continuity.
	\item We provide a theoretical foundation to extend classical SNNs to the continuous domain using GKOs to approximate mappings between continuous sheaves.
    \item We develop new metrics to evaluate consistency of embeddings with structural properties of high-dimensional input space.
	\item We extensively evaluate NOFE and compare with relevant methods, showcasing better properties in terms of continuity, consistency, and preservation of local properties.
\end{enumerate}

\section{Related Work}

Ubiquitously used dimensionality reduction methods such as PCA, t-SNE, UMAP, and diffusion maps aim to embed high-dimensional observations into lower-dimensional space while preserving selected notions of geometric or statistical similarity \cite{2002PrincipalComponentAnalysis, maaten2008VisualizingDataUsing, mcinnes2020UMAPUniformManifold, coifman2006DiffusionMaps}. These methods typically operate on finite sets of data points and construct embeddings based on pairwise relationships in feature space. While effective in practice, they disregard any continuous underlying domain structure, and the resulting embeddings are defined only on the sampled observations.
Vector Diffusion  Maps \cite{singer2011VectorDiffusionMaps} extend diffusion-based methods by incorporating local tangent space approximations, leading to a connection Laplacian, which tightly connects this method to SNNs \cite{barbero2022SheafNeuralNetworks, bodnar2022NeuralSheafDiffusion}. 
But none of these methods incorporates information beyond the pure point cloud structure.

To incorporate structure beyond isolated data points, a number of approaches introduce spatial or geometric regularity into the embedding process. Methods such as SpatialPCA \cite{shang2022SpatiallyAwareDimension} and graph-regularized autoencoders \cite{liao2017GraphRegularizedAutoEncoders} incorporate spatial coordinates or graph structure into latent variable models, so that samples that are close in the domain are encouraged to have similar latent representations. However, these approaches remain fundamentally discrete: spatial information enters through regularization or covariance structure on finite samples, rather than through an explicit representation of functions over a continuous domain.

A related line of work arises in functional data analysis, where observations are modeled as functions rather than finite-dimensional vectors. Techniques such as functional PCA \cite{ramsay2005PrincipalComponentsAnalysis} represent these functions in terms of smooth basis expansions and perform dimensionality reduction in function space. While this introduces continuity in the representation, the resulting embeddings are typically obtained through global linear projections and do not explicitly encode locality or domain-dependent interactions beyond the chosen basis.

More recently, NO methods such as DeepONet \cite{lu2021DeepONetLearningNonlinear}, Fourier Neural Operators \cite{li2021FourierNeuralOperator}, and their variants have been developed to learn mappings between infinite-dimensional function spaces that generalize across discretizations, allowing for resolution-independent evaluation and continuous outputs. However, their primary objective is operator learning, particularly for physical systems, solving partial differential equations or super-resolution, rather than dimensionality reduction \cite{oommen2025IntegratingNeuralOperators, calvello2025ContinuumAttentionNeural, wang2024LatentNeuralOperator, leinonen2024ModulatedAdaptiveFourier, jafarzadeh2024PeridynamicNeuralOperators, wei2023SuperResolutionNeuralOperator, kovachki2023NeuralOperatorLearning}. 
While NO frameworks such as LNO \cite{wang2024LatentNeuralOperator} or CViT \cite{wang2024CViTContinuousVision}, as well as variational autoencoding neural operators \cite{seidman2023VariationalAutoencodingNeural} and nonlinear model reduction approaches for operator learning \cite{eivazi2024NonlinearModelReduction}, compress information from all input samples into a shared latent representation, they do not explicitly produce a representation that can be evaluated point by point over the domain. Instead, the output remains tied to a global encoding of the input set, rather than being expressed as a function defined over space.

In parallel, SNNs \cite{hansen2020SheafNeuralNetworks, barbero2022SheafNeuralNetworks, bodnar2022NeuralSheafDiffusion} introduce algebraic structures for modeling local consistency across discrete domains. These approaches encode relationships between local sections via restriction maps and enforce compatibility conditions across overlapping regions. While they provide a principled way to incorporate locality and structure, they are typically defined on discrete base spaces and do not directly address continuous function-valued dimensionality reduction.

In contrast, NOFE views dimensionality reduction as a function-to-function mapping problem in a setting with continuous sheaf structure and uses a NO approach to learn this mapping, enabling continuous, domain-aware dimensionality reduction.

\section{NOFE: Neural Operator Function Embedding for Continuous Domain-aware Dimensionality Reduction}
Assume there is a continuous function $f:\mathcal{M} \rightarrow \mathbb{R}^{d_f}$ over some manifold $\mathcal{M}$ that is accessible through sampled observations $D=\{(x_i, y_i)\}_{i=1}^N$ with $y_i = f(x_i)$.
This could be, e.g. multi-channel satellite images over space or machine sensor data over time.
Traditional dimensionality reduction aims to reduce the dataset by finding a new representative point $z_i \in \mathbb{R}^{d_g}$ for each $y_i \in \mathbb{R}^{d_f}$ with $d_g < d_f$ in a one-to-one correspondence. 
Existing methods view $D$ as a set of individual datapoints in $\mathbb{R}^{d_f}$, completely disregarding properties like continuity or domain relationship, which can lead to well known shortcomings, like discontinuity, overclustering or loss of domain awareness \cite{bergam2025TSNEExaggeratesClusters, liu2025AssessingImprovingReliability}.

With NOFE, we aim to learn an operator $\mathcal{R}$ that performs dimensionality reduction in function space by mapping $(\mathcal{R} f)(x)=g(x)$ with $g:\mathcal{M} \rightarrow \mathbb{R}^{d_g}$ and $d_g < d_f$ directly reducing the functions codomain, irrespective of sampling. 
This way, continuity and domain relationships are preserved. Embeddings of $y_i$ are given by $z_i = g(x_i)$.


\begin{definition}[Sheaf \cite{curry2014SheavesCosheavesApplications, bredon1967SheafTheory}]
A sheaf $\mathcal{F}$ over a topological space $\mathcal{M}$ assigns to each open set $U \subseteq \mathcal{M}$ a set of objects $\mathcal{F}(U)$, called sections over $U$, together with:

\textbf{(i) Restriction:} For every inclusion $V \subseteq U$, there exists a map
\[
\rho_{U,V}: \mathcal{F}(U) \rightarrow \mathcal{F}(V),
\]
which restricts sections from larger to smaller domains.

\textbf{(ii) Gluing:} If a collection of local sections $\{s_i \in \mathcal{F}(U_i)\}_{i \in I}$ defined on an open cover $\{U_i\}_{i \in I}$ agrees on overlaps, i.e.
\[
\rho_{U_i, U_i \cap U_j}(s_i) = \rho_{U_j, U_i \cap U_j}(s_j),
\]
then there exists a unique global section $s \in \mathcal{F}(\bigcup_i U_i)$ such that $s|_{U_i} = s_i$ for all $i$.
\end{definition}

\begin{definition}[Sheaf morphism \cite{2022CellularSheafCohomology}] \label{def: sheaf morphism}
	A sheaf morphism $\Phi : \mathcal{F}\rightarrow \mathcal{G}$ is a collection of maps
	\begin{equation}
		\Phi _U : \mathcal{F}(U)\rightarrow \mathcal{G}(U) \quad \forall \ U \subseteq \mathcal{M},
	\end{equation}
	that commute with restrictions
	\begin{equation}
		\Phi _V \circ \rho_{U,V}^{\mathcal{F}} = \rho_{U,V}^{\mathcal{G}} \circ \Phi _U \quad \forall \ V \subseteq U.
	\end{equation}
	Here, $\rho_{U,V}^{\mathcal{F}}$ and $\rho_{U,V}^{\mathcal{G}}$ are the restriction maps from $U$ to $V$ for the sheaves $\mathcal{F}$ and $\mathcal{G}$, respectively.
\end{definition} 

\begin{definition}[Locality] \label{def: locality}
	An operator $\Phi$ is called local if for every open set $U \subseteq \mathcal{M}$ and all
	$s,t \in \Gamma (\mathcal{M}, \mathcal{F})$ satisfying $s|_U = t|_U$, it holds that
	\begin{equation}
		\Phi(s)|_U = \Phi(t)|_U.	
	\end{equation}
\end{definition}

\begin{definition}[Sheaf of continuous functions]
	For any open set $U \subseteq \mathcal{M}$, let $\mathcal{C}_d (U) := C(U, \mathbb{R}^{d} ) = \{f:U\rightarrow \mathbb{R}^{d_f} :f$ is continuous$\}$.
	The assignment $U \rightarrow \mathcal{C}_d(U)$ defines the sheaf of continuous, $\mathbb{R}^{d}$-valued functions on $\mathcal{M}$. 
\end{definition}

\begin{definition}[Global sections \cite{bredon1967SheafTheory}]
	For a given sheaf $\mathcal{F}(\mathcal{M})$ we define $\Gamma (U, \mathcal{F})$ as the set of sections $\mathcal{F}(U)$ over any open set $U \subseteq \mathcal{M}$.
	The set $\Gamma (\mathcal{M}, \mathcal{F})$ is called the set of global sections.
\end{definition}

\begin{proposition}
	For any continuous function $f:\mathcal{M} \rightarrow \mathbb{R}^{d}$ there is a sheaf $\mathcal{F}_{d}(\mathcal{M}) \subseteq \mathcal{C}_{d} (\mathcal{M})$, such that $f \in \mathcal{F}_{d}(\mathcal{M})$ for all possible $f$.
\end{proposition}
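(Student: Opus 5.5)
The statement is almost tautological, and the simplest proof takes $\mathcal{F}_d := \mathcal{C}_d$ itself, the sheaf of continuous $\mathbb{R}^d$-valued functions from the preceding definition. Then $\mathcal{F}_d(\mathcal{M}) = \mathcal{C}_d(\mathcal{M}) = C(\mathcal{M},\mathbb{R}^d)$, and any continuous $f$ lies in it by definition. The real content is verifying that the assignment $U \mapsto C(U,\mathbb{R}^d)$ is a sheaf in the sense of the paper's definition, with restriction $\rho_{U,V}(s) = s|_V$. The paper asserts this but does not prove it, so I will supply the argument.

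\emph{Restriction.} For $V \subseteq U$ open and $s \in C(U,\mathbb{R}^d)$, the map $s|_V = s \circ \iota_{V}$ is continuous, because the inclusion $\iota_V : V \hookrightarrow U$ is continuous in the subspace topology. Hence $\rho_{U,V}$ is well defined. Functoriality, $\rho_{V,W}\circ\rho_{U,V} = \rho_{U,W}$ and $\rho_{U,U} = \mathrm{id}$, is immediate.

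\emph{Gluing.} Let $\{U_i\}$ be an open cover of $U = \bigcup_i U_i$, and let $s_i \in C(U_i,\mathbb{R}^d)$ agree on all overlaps $U_i \cap U_j$.
\begin{enumerate}
\item Define $s(x) := s_i(x)$ for any $i$ with $x \in U_i$. The agreement on overlaps makes this independent of the choice of $i$, so $s$ is a well-defined function on $U$ with $s|_{U_i} = s_i$.
\item Uniqueness follows because any $s'$ with $s'|_{U_i} = s_i$ for all $i$ agrees with $s$ pointwise on every $U_i$, and the $U_i$ cover $U$.
\item For continuity, take an open $O \subseteq \mathbb{R}^d$. Then
\[
s^{-1}(O) = \bigcup_i s_i^{-1}(O).
\]
Each $s_i^{-1}(O)$ is open in $U_i$, and $U_i$ is open in $U$, so each $s_i^{-1}(O)$ is open in $U$. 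A union of open sets is open, so $s$ is continuous.
\end{enumerate}
This local-to-global continuity argument is the only step needing care; it uses precisely that the cover consists of open sets. (For the empty cover of $U=\emptyset$, gluing produces the unique empty function.)

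If the intended reading is instead a possibly smaller sheaf adapted to a single $f$, we can cover that too. Take the subsheaf generated by restrictions of $f$, sheafified; one would check that sheafification stays inside $\mathcal{C}_d$, since $\mathcal{C}_d$ is itself a sheaf. I would only remark on this and present $\mathcal{F}_d = \mathcal{C}_d$ as the proof, since the quantifier ``for all possible $f$'' suggests a single sheaf containing every continuous $f$, which $\mathcal{C}_d$ provides.

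The main obstacle is not technical difficulty but pinning down the intended statement. The definition of $\mathcal{C}_d$ has a typo, writing $d_f$ for $d$, which should be read as $\mathbb{R}^d$. The inclusion ``$\mathcal{F}_d(\mathcal{M}) \subseteq \mathcal{C}_d(\mathcal{M})$'' should be read as an inclusion of global sections, extended to a subsheaf $\mathcal{F}_d(U) \subseteq \mathcal{C}_d(U)$ for every open $U$.
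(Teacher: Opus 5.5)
Your proof is correct and follows the same route as the paper, whose one-line argument is that continuity of $f$ already places it in $\mathcal{C}_d(\mathcal{M})$, so $\mathcal{F}_d = \mathcal{C}_d$ works. Your check of the restriction and gluing axioms for $U \mapsto C(U,\mathbb{R}^d)$ spells out what the paper assumes without proof in its definition of the sheaf of continuous functions.
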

\begin{proof}
	This follows trivially from the restrictions on $f$ to be at least a continuous function. Additional restrictions may apply depending on the data. 
\end{proof}

\begin{assumption} \label{ass: global sections}
	Any function $f$, giving rise to our dataset $D$, is a random global section of $\mathcal{F}_{d_f}(\mathcal{M}) \subseteq \mathcal{C}_{d_f} (\mathcal{M})$, drawn from a distribution that can be written as
	\begin{equation}
		f \sim \mathcal{P} \left(\Gamma \left(\mathcal{M}, \mathcal{F}_{d_f} \right) \right).
	\end{equation}
	Structure of the sheaf, as well as the distribution $\mathcal{P}$ are given by the nature of the data.
\end{assumption}
This assumption is motivated by the idea that we are not dealing with mathematical constructions but real world data that follows the laws of physics.

\paragraph{Problem Formulation.}
Dimensionality reduction of continuous functions, can be seen as a function to function mapping $(\mathcal{R} f)(x)=g(x)$ between sets of global sheaf sections, with the function $f \in \left(\Gamma \left(\mathcal{M}, \mathcal{F}_{d_f} \right) \right)$ and $g \in \left(\Gamma \left(\mathcal{M}, \mathcal{F}_{d_g} \right) \right)$, where $\mathcal{F}_{d_f} \subseteq \mathcal{C}_{d_f}(\mathcal{M})$ and $\mathcal{F}_{d_g} \subseteq \mathcal{C}_{d_g}(\mathcal{M})$.
To ensure consistency across maps on any subsets $V \subseteq U \subseteq \mathcal{M}$, $\mathcal{R}$ is also required to satisfy the commutation condition $\mathcal{R}_V \circ \rho_{U,V}^{\mathcal{F}_{d_f}} = \rho_{U,V}^{\mathcal{F}_{d_g}} \circ \mathcal{R} _U$ and thereby defining a sheaf morphism.

\noindent
SNNs are by definition, data-driven maps that work on sheaf structures \cite{hansen2020SheafNeuralNetworks}.
However, classic SNNs work on cellular sheaves and therefore assume a discrete data structure, while our goal is to explicitly account for continuity.
In this work we propose the use of NOs as a generalization of SNNs to model mappings between continuous sheaves.
This approach may be interpreted as a Sheaf Neural Operator. A more detailed connection between the frameworks is provided in Appendix \ref{app:sheaf_snn_appendix}.

With NOFE, we introduce a GKO \cite{li2020NeuralOperatorGraph} approach to approximate the operator $\mathcal{R}$ that reduces the stalk dimension of continuous sheaves and thus, embeds functions in a lower dimensional space, while not losing their domain dependence. 
Using a dataset $D=\{(x_i, y_i)\}_{i=1}^N$, where $X=\{x_i\}_{i=1}^N \subset \mathcal{M}$ and $y_i=f(x_i)$, a graph $G=(V,E)$ of nodes $V=\{i\}$ and edges $E$ is constructed with one node $i$ per datapoint.
The node features are given by the function values $y_i$.
Edges are constructed based on domain relationship.
Node $i$ is connected to any node $j \in N(i)$, where $N(i) = \{j \mid x_j \in B_r(x_i)\}$ and $B_r(x)$ denotes the neighborhood around $x$ in $\mathcal{M}$.
Each edge is equipped with attributes $e_{ij}$ containing local information about the nodes' relation. Here we choose to use absolute and relative coordinates as attributes, such that the set of edges $E$ can be written as
\begin{equation}
	E = \{(i,j) : j \in N(i)\}, \quad
	e_{ij} = \{x_i, x_i - x_j\}, \; \forall (i,j) \in E.
\end{equation}
The GKO itself consists of three main components:
\begin{enumerate}
	\item A lifting step $h_i = L y_i$ with $L$ being a single neural layer applied on the node features.
	\item $T$ iterative message passing steps:
		\begin{equation} \label{eq: message passing}
			h_i^{(t+1)}= Wh_i^{(t)} + \sum_{j \in N(i)\cap X} K(i,j)\, h_j^{(t)}, \quad \forall i \in X,
		\end{equation}
		with the kernel matrix $K$, which is predicted by a Multi-Layer Perceptron (MLP) based on the edge attributes $e_{ij}$.
	\item A projection $z_i = P h_i^{(T)}$, with $P$ being another single neural layer, determining the output dimension of the output function $g(x)$.
\end{enumerate}

To find a function $g(x)$ that embeds $f(x)$ in a lower dimensional space, while still encoding information based on local operations, we take inspiration from t-SNE, which aims to retain neighborhood relation in the two spaces.
Unlike t-SNE, which disregards any domain dependence and compares points solely in the feature space (mapping points that are similar in the input space to similar values in the output space), we take a different approach and compare the relation of node features that are neighbors in the domain $\mathcal{M}$. By doing this, we avoid inconsistent results which are inherent to methods like t-SNE, in particular when sampling points from different sections of a function.

Towards this end, we use a student-t-kernel to calculate pairwise affinities between points in the high- and low-dimensional space
\begin{align} \label{eq: affinity}
p_{ij} = \frac{(1 + \|y_i - y_j\|^2)^{-1}}{\sum\limits_{\substack{(i,k)\in E \\ k\neq j}} (1 + \|y_i - y_k\|^2)^{-1}}, \quad & 
q_{ij} = \frac{(1 + \|z_i - z_j\|^2)^{-1}}{\sum\limits_{\substack{(i,k)\in E \\ k\neq j}} (1 + \|z_i - z_k\|^2)^{-1}} &
\forall (i,j) \in E.
\end{align}
We use the numerical version of the Kullback-Leibler (KL) divergence
\begin{equation} \label{eq: KL divergence}
	D_{\mathrm{KL}}(P \,\|\, Q)
= \sum_{i \ne j} p_{ij}\,\log \frac{p_{ij}}{q_{ij}}
\end{equation}
as a loss function.
This way the operator optimizes the learnable parameters of its neural layers $L$, $W$, $P$ and the MLP predicting the kernel matrix $K(i,j)$ (Equation \ref{eq: message passing}), to find embeddings that retain local statistical properties of the input function.

\begin{proposition}[Local operator] \label{prop: locality}
	By using a GKO as described above, NOFE is modeling a local operator in the sense of Definition \ref{def: locality}.
\end{proposition}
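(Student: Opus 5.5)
The plan is to show that the GKO has finite propagation range. The output at a node depends only on input values inside a domain ball of radius $Tr$ around that node. Locality in the sense of Definition \ref{def: locality} then follows from restricting the two inputs $s,t$ to a common open set $U$. I will treat each of the three components (lifting, message passing, projection) as an operator on node features and follow how the region of dependence grows.

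\textbf{Step 1 (pointwise components).} The lifting $h_i^{(0)} = L y_i$ and the projection $z_i = P h_i^{(T)}$ act node-wise: $h_i^{(0)}$ depends only on $y_i = s(x_i)$. So if $s$ and $t$ agree at $x_i$, their lifted features agree at $i$. These steps do not enlarge the dependence region.

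\textbf{Step 2 (induction over message passing).} I will prove by induction on $t$ that $h_i^{(t)}$ is a function of $\{y_j : x_j \in B_{tr}(x_i)\}$ only.
\begin{itemize}
\item The case $t=0$ is Step 1.
\item For the inductive step, use Equation \ref{eq: message passing}. The term $W h_i^{(t)}$ depends on data in $B_{tr}(x_i)$. Each summand $K(i,j)\,h_j^{(t)}$ has $x_j \in B_r(x_i)$. The kernel $K(i,j)$ is an MLP of the edge attributes $e_{ij} = \{x_i, x_i - x_j\}$ alone, so it does not depend on the input function at all. The feature $h_j^{(t)}$ depends on data in $B_{tr}(x_j) \subseteq B_{(t+1)r}(x_i)$ by the triangle inequality in $\mathcal{M}$.
\end{itemize}
Hence $h_i^{(t+1)}$ depends only on data in $B_{(t+1)r}(x_i)$. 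Composing with Step 1, $g(x_i) = z_i$ depends only on $s|_{B_{Tr}(x_i)}$. The same argument applies at an arbitrary query location $x$, since the graph construction is identical there.

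\textbf{Step 3 (from finite range to Definition \ref{def: locality}).} Let $s|_U = t|_U$. For every $x \in U$ with $B_{Tr}(x) \subseteq U$, Step 2 gives $\Phi(s)(x) = \Phi(t)(x)$.

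This step is the main obstacle. Definition \ref{def: locality} asks for agreement on all of $U$, but Step 2 only delivers it on the eroded set $U_{Tr} = \{x \in U : B_{Tr}(x) \subseteq U\}$. Points within distance $Tr$ of $\partial U$ can receive messages from outside $U$.

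I would resolve this in two stages:
\begin{enumerate}
\item State the result precisely as finite-range locality: agreement on $U_{Tr}$, which exhausts $U$ as $Tr \to 0$.
\item Argue that, for fixed $T$ and $r \to 0$, the GKO approximates an operator whose value at $x$ depends only on the germ of $s$ at $x$. Since $\bigcup_{r>0} U_{Tr} = U$ for open $U$, such an operator satisfies Definition \ref{def: locality} exactly.
\end{enumerate}

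So the proposition holds exactly in this limit and up to a boundary layer of width $Tr$ for finite $r$. The key structural fact behind all of this is that $K(i,j)$ depends on the edge attributes $e_{ij}$ only. If $K$ were allowed to depend on the features $h$, Step 2 would still hold, but it would need the additional observation that those features are themselves local by induction.
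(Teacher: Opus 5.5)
Your argument is correct and rests on the same idea as the paper's one-line proof: the kernel $K(i,j)$ is supported only on the neighborhood $B_r$. You make this rigorous by propagating that support through the $T$ message-passing steps to $B_{Tr}$, and by observing that Definition~\ref{def: locality} then holds exactly only on the eroded set $U_{Tr}$, and on all of $U$ only as $r\to 0$. That limit is what the paper's ``arbitrarily small neighborhood'' implicitly assumes, and what the appendix concedes as locality being ``recovered in the limit of dense sampling,'' so your caveat sharpens the paper's claim rather than exposing a gap in your proof.
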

\begin{proof}
    For any subset $U \subseteq \mathcal{M}$, $R_U$ and the associated kernel \(K\) has support only on indices corresponding to an arbitrarily small neighborhood \(B_r\).
\end{proof}

\begin{proposition}
	In our setting, NOFE --- while not being a sheaf morphism in strict mathematical sense --- is compatible with sheaf structures and may be interpreted as an approximation of a stalk-dimension reducing operator, acting on continuous sheaf structures.
\end{proposition}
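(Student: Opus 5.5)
The approach is to split the statement into its two claims: that NOFE is not a strict sheaf morphism, and that it nonetheless approximates one. I would first isolate the exact sense in which the commutation condition of Definition \ref{def: sheaf morphism} fails. For open $V \subseteq U \subseteq \mathcal{M}$, the restricted input $\rho_{U,V}^{\mathcal{F}_{d_f}}(f)$ is sampled only on $X \cap V$. Hence nodes $x_i \in V$ whose ball $B_r(x_i)$ crosses $\partial V$ lose neighbours in the message passing (Equation \ref{eq: message passing}). The normalisation in the affinities (Equation \ref{eq: affinity}) also changes near $\partial V$. So $\mathcal{R}_V(f|_V)$ and $(\mathcal{R}_U f)|_V$ may differ on the collar $V_{Tr} := \{x \in V : d(x, \partial V) < Tr\}$. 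That is exactly the non-strictness.

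The key step is to show agreement away from this collar. I would use Proposition \ref{prop: locality} iteratively. One message passing step at node $i$ depends only on $h_j$ for $x_j \in B_r(x_i)$. The lifting $L$ and projection $P$ act pointwise. By induction on $t$, the value $z_i = P h_i^{(T)}$ therefore depends only on $f$ restricted to $B_{Tr}(x_i)$, with the kernel MLP depending only on the edge data $e_{ij}$. It follows that for $x_i \in V \setminus V_{Tr}$,
\begin{equation*}
\bigl(\mathcal{R}_V (f|_V)\bigr)(x_i) = \bigl((\mathcal{R}_U f)|_V\bigr)(x_i).
\end{equation*}
In other words, the commutation square holds exactly on the interior shrunk by $Tr$. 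One caveat: because $e_{ij}$ contains absolute coordinates $x_i$, the same parameters must be used for $\mathcal{R}_U$ and $\mathcal{R}_V$. I would state this explicitly, so that $\mathcal{R}$ is one family indexed by $U$.

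Next I would bound the discrepancy on the collar. Assume the MLP kernel is bounded and the weights $W$, $L$, $P$ have bounded operator norm. Assume also that the sampling density is bounded. Each missing neighbour contribution is then bounded, and a Grönwall-type bound over the $T$ steps gives $\|\mathcal{R}_V(f|_V) - (\mathcal{R}_U f)|_V\|_\infty \le C$ on $V_{Tr}$. The collar itself has measure $O(Tr)$. In $L^p$, therefore, the commutation error is $O((Tr)^{1/p})$, which vanishes as $r \to 0$ at fixed $T$. I expect this to be the main obstacle. It needs quantitative assumptions that the paper has not stated: a Lipschitz or bounded MLP, and a quasi-uniform sample. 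The neighbour-sum normalisation also interacts badly with varying density. If a clean bound proves hard, I would first aim for the weaker pointwise statement: exact commutation on $V \setminus V_{Tr}$ for every $V$, so any point of $V$ is eventually interior as $r\to0$.

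Finally, compatibility with the sheaf structure would follow from the exact-interior property. Let local embeddings on a cover $\{U_i\}$ be computed with covers thickened by $Tr$. The outputs then agree on overlaps away from the collars, and the gluing axiom of $\mathcal{F}_{d_g}$ produces a global section. This global section coincides with $\mathcal{R}_{\mathcal{M}} f$. Continuity of $g$, needed for $g \in \Gamma(\mathcal{M}, \mathcal{F}_{d_g})$, I would argue from the continuity of the kernel MLP in $e_{ij}$ via the integral-operator limit of the GKO. The stalk dimension is reduced from $d_f$ to $d_g$ by the choice of $P$.
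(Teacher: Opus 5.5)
Your route is genuinely different from the paper's. The paper gives no formal argument for this proposition. Beyond the one-line locality claim of Proposition \ref{prop: locality}, it supports the statement empirically in Section \ref{sec: gluing}: two overlapping patches are embedded separately and checked for agreement on the overlap, visually and via a gluing MSE of $0.080$ against $\geq 1.369$ for the baselines. The patch-stitching errors serve as further evidence. You instead turn the heuristic into an actual statement. An induction on $t$ shows that $h_i^{(t)}$ computed from $X\cap V$ and from $X$ coincide whenever $B_{tr}(x_i)\subseteq V$, because $L$ acts pointwise and $K(i,j)$ depends only on $e_{ij}$. So the square of Definition \ref{def: sheaf morphism} commutes exactly off a collar of width $Tr$, and fails only there, by a bounded amount. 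Your approach buys a mechanism behind the small gluing and stitching errors, and a guarantee for every input $f$ rather than for the sampled test days. The paper's approach buys a test of the trained, implemented network, which your argument describes only in idealised form.

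Several steps need tightening.

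(i) The affinities $p_{ij}, q_{ij}$ enter only the training loss. Once the weights are fixed they play no role in comparing $\mathcal{R}_V(f|_V)$ with $(\mathcal{R}_U f)|_V$, so they are not a source of non-commutation.

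(ii) The implementation uses $k$-nearest neighbours (Appendix \ref{app: implementation}), which the paper itself says breaks Definition \ref{def: locality}. Your induction survives, but the collar width becomes $T$ times the largest $k$-NN radius in the receptive field, which depends on the sample. Near $\partial V$, neighbours are also substituted rather than lost.

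(iii) Your gluing paragraph is essentially tautological: restrictions of one global output trivially glue back to it. It also presupposes that the output is a section of $\mathcal{F}_{d_g}$, i.e.\ continuous. For the discrete operator this generally fails. As a query point $x_q$ moves, the neighbour set $X\cap B_r(x_q)$ changes by jumps. Moreover, the unnormalised sum in Equation \ref{eq: message passing} has no finite integral-operator limit under refinement without a $1/|N(i)|$ or quadrature weight. Continuity therefore holds only for an idealised continuum model. This, together with the output's dependence on the sample set $X$ rather than on $f$ alone, is a second reason NOFE is not a strict sheaf morphism. You should list it alongside the collar effect.

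(iv) Letting $r\to 0$ at fixed sampling density eventually leaves each ball containing only its centre. The operator then degenerates to a pointwise map of $y_i$, which is trivially a sheaf morphism. Your $O((Tr)^{1/p})$ bound, which also needs a regular $\partial V$, therefore says little about the trained model. The meaningful form of your result is the fixed-$r$ one: exact commutation everywhere except on a collar that is thin relative to the patch.
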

We demonstrate NOFE's numerical compatibility with sheaf structure in section \ref{sec: gluing}.

\section{Experiments} \label{sec: experiments}

We evaluate NOFE on multiple datasets, underlining the generality of the method but will in this section highlight results for the ERA5\footnote{\url{https://cds.climate.copernicus.eu/datasets/reanalysis-era5-pressure-levels?tab=overview}} climate reanalysis dataset, as it provides the most intuitive separation between domain and feature space with a clear interpretation of features and results.
Additional results including diffusion MRI and satellite images are provided in Appendix \ref{app: dMRI} and \ref{app: satellite}.
Our ERA5 dataset consists of 11 climate variables at a pressure level of 550 hPa over a spatial grid covering Europe (180 $\times$ 180 resolution).
We generate samples by randomly drawing 1000 -- 10000 points from the grid.
A training dataset is constructed with 10 samples per day for each day of 2018.
All evaluations are performed by generating one sample per day from January 2019.

For comparison, we project data into 3-dimensional space, enabling visualization by interpreting the three dimensions as RGB channels and interpolating predictions onto a regular grid.
As the selection of color channels for any embedding dimension is arbitrary, these colors cannot be interpreted directly and merely serve the visualization of resulting data structures.  
While spatially aware methods such as SpatialPCA incorporate domain structure, we are not aware of any baseline that performs resolution-independent dimensionality reduction with predictions at arbitrary query locations over a continuous domain, such that we choose PCA, t-SNE and UMAP as ubiquitously used methods for the comparison.

We first provide experiments with a point-to-point correspondence between input and output data for a direct comparison across methods, using metrics for global and local structure of the embeddings.
Subsequently, we perform a super-resolution experiment showcasing NOFE's capability to predict embeddings at arbitrary query locations beyond mere interpolation.
Implementation details, including parameter choices and algorithms are deferred to Appendix \ref{app: implementation}.
The code base is available online\footnote{\url{https://anonymous.4open.science/r/NOFE-FCB2/}}.

\subsection{Consistency and Sampling Independence} \label{sec: Consistency and Sampling Independence}
\begin{wrapfigure}{r}{0.34\textwidth}
	\centering
	\includegraphics[width=1.05\linewidth,
	 trim=0 0 0 35
	 ]{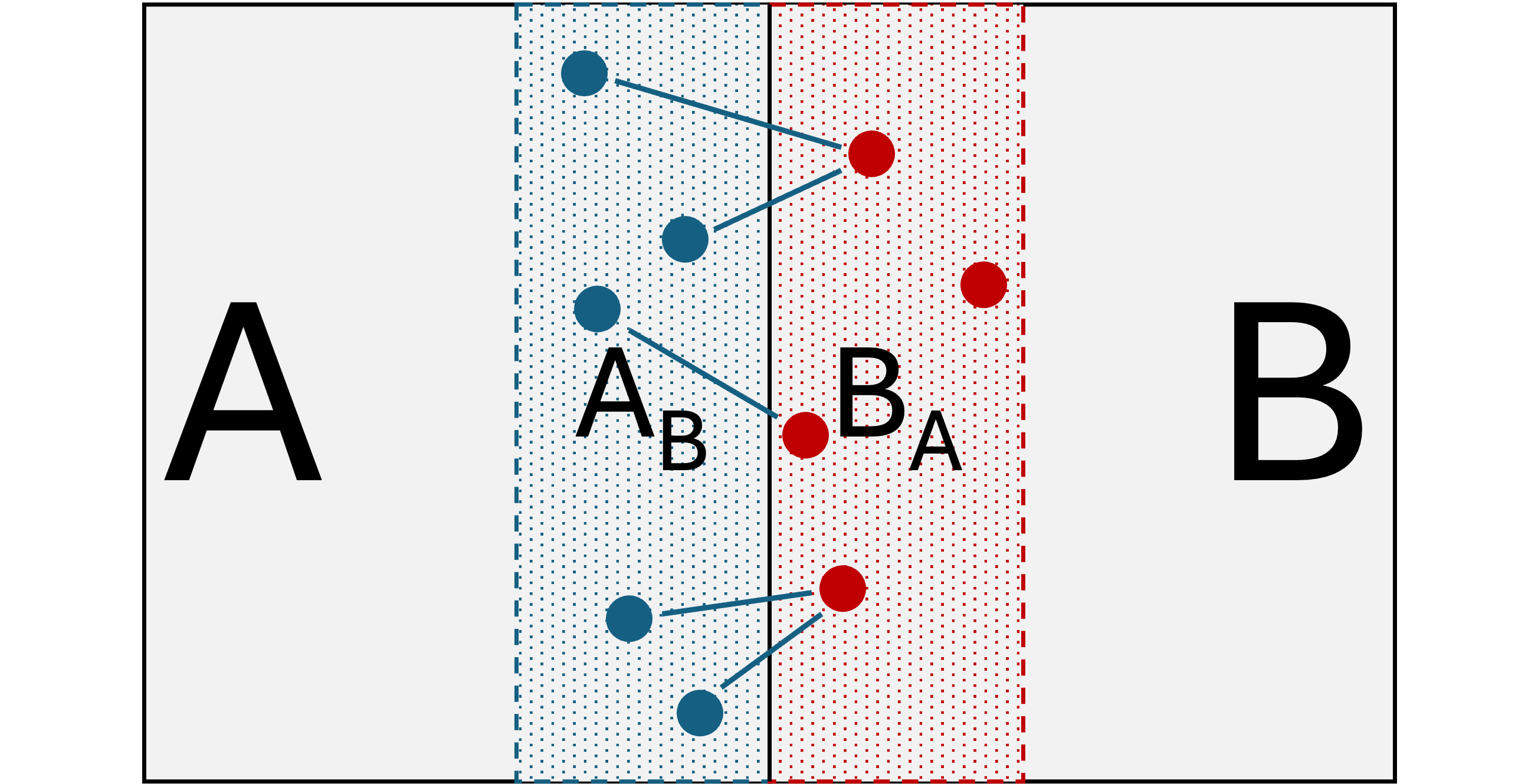}
	\caption{Scheme for regional patches $A$, $B$ and the border regions $A_B$ and $B_A$ between them. Blue lines indicate nearest neighbors from $A_B$ in $B_A$.  }
	\label{fig: patching-scheme}	
	\vspace{-10pt}
\end{wrapfigure}
NOFE works on the level of the continuous structures underlying the data rather than a detached point cloud, making it a sampling independent method.
To demonstrate this important property, we split up the domain in four subregions, sample and reduce random subsets for each region independently and merge the results together subsequently.
An exemplary result can be seen in Fig. \ref{fig: patch-stitching results}.
The traditional methods immediately reveal the four subregions, as the embedding spaces --- here seen as color schemes --- are inconsistent, leading to sharp discontinuities between the patches. 
For the NOFE embeddings on the other hand, very close inspection is necessary to see any signs of the separate embedding, as the transition from one patch to another is very smooth and color schemes appear consistent\footnote{We observe the same effect by resampling random subsets from the same date and comparing the results, yielding NOFE embeddings that are consistent, unlike the other methods.}.

To quantify this observation, we define a border region of 2$^\circ$ between every neighboring patches $A$ and $B$, with $A_B$ denoting the border region within $A$ towards patch $B$ and $B_A$ the border region within $B$, respectively (see Fig. \ref{fig: patching-scheme}). 
For each point $a \in A_B$, we find the spatially closest point $b_{N(a)} \in B_A$ and calculate the Euclidean distance $d(z_a, z_{b_{N(a)}})$ between the feature embeddings $z_a$, $z_{b_{N(a)}}$ and vice versa.
For the embeddings to be consistent, we would expect this value to be small, compared to general variations in the data, since the embeddings represent a smooth function.
To account for differences in embedding scale and properties, the feature distance must be normalized.
We consider two choices:
\begin{enumerate}
	\item Region normalization: Each distance $d(z_a, z_{b_{N(a)}})$ is normalized by $\langle d(z_a, z_{a'}) \rangle_{a' \in A_B}$, the mean feature distance of all points inside the border region of the source point.
	\item Neighbor normalization: Each distance $d(z_a, z_{b_{N(a)}})$ across a border, is normalized by a corresponding distance within a border $d(z_a, z_{a_{N(a)}})$, where $z_{a_{N(a)}}$ denotes the feature vector of the closest point to $a$ in $A_B$.
\end{enumerate}

We refer to these metrics as \textit{Patch-Stitching Errors} denoted as $SE$-region and $SE$-local, respectively. The results in Table \ref{tab: stress and patch-stitching} show that the ranking of the methods is consistent for both normalization variants. At the same time, the errors are similar for region normalization and diverge more for neighbor normalization.
\begin{figure}[h]
    \centering
    \begin{subfigure}{0.24\textwidth}
        \centering
        \includegraphics[width=\linewidth]{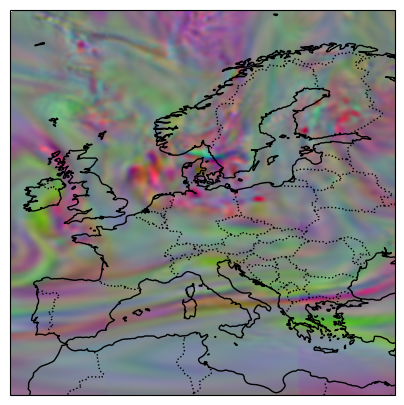}
        \caption{NOFE}
        \label{fig:first}
    \end{subfigure}
    \hfill
    \begin{subfigure}{0.24\textwidth}
        \centering
        \includegraphics[width=\linewidth]{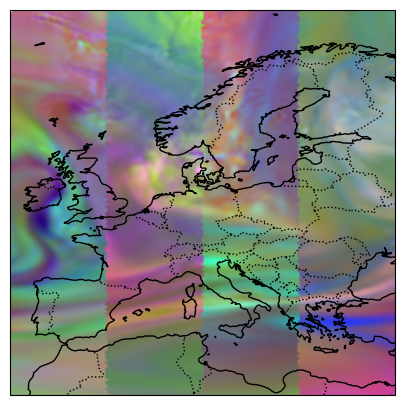}
        \caption{PCA}
        \label{fig:second}
    \end{subfigure}
    \hfill
    \begin{subfigure}{0.24\textwidth}
        \centering
        \includegraphics[width=\linewidth]{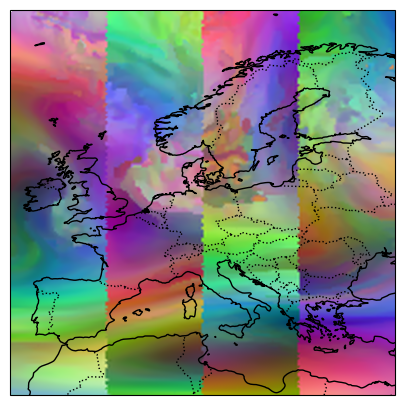}
        \caption{t-SNE}
        \label{fig:second}
    \end{subfigure}
    \hfill
    \begin{subfigure}{0.24\textwidth}
        \centering
        \includegraphics[width=\linewidth]{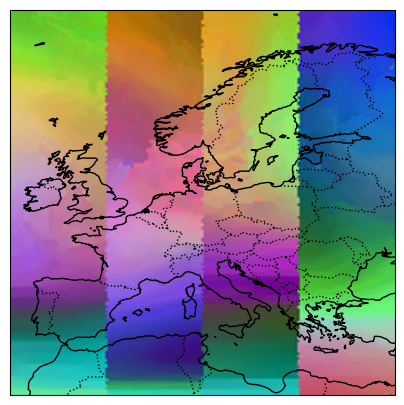}
        \caption{UMAP}
        \label{fig:second}
    \end{subfigure}

    \caption{Exemplary visualization of patch stitching results for data sampled from 2019-06-15.}
    \label{fig: patch-stitching results}
\end{figure}

\begin{table}[h]
\caption{Patch-stitching and feature distance preservation metrics across methods.}
\begin{tabular}{lllll}
\toprule
& NOFE & PCA & t-SNE & UMAP \\
\midrule    
	$SE$-local   & \textbf{0.823} $\pm$ 0.047 & \underline{0.843} $\pm$ 0.110 & 0.892 $\pm$ 0.097 & 1.126 $\pm$ 0.199 \\
    $SE$-region  & \textbf{13.035} $\pm$ 0.852 & \underline{22.180} $\pm$ 4.887 & 59.030 $\pm$ 14.271 & 267.586 $\pm$ 48.483\\
	Stress-1     & \underline{0.379} $\pm$ 0.014 & \textbf{0.268} $\pm$ 0.014 & 4.495 $\pm$ 0.736 & 0.472 $\pm$ 0.068\\
	Stress-local & \textbf{0.111} $\pm$ 0.119 & \underline{0.398} $\pm$ 0.168 & 0.773 $\pm$ 0.629 & 0.791 $\pm$ 0.190 \\
\bottomrule
\end{tabular}
\label{tab: stress and patch-stitching}
*\textbf{Bold} indicates the best result for each metric and \underline{underlining} indicates the second-best.
\end{table}

\subsection{Preservation of Feature Distance and Continuity}
We examine how well data structures of the high-dimensional space are retained in the low-dimensional space.
We use the Kruskal Stress-1 (Eq. \ref{eq: stress}), with $d_{ij}^a = \lVert a_i - a_j \rVert$ as a common metric to estimate the global preservation of feature distances in the low dimensional space \cite{borg2005MDSModelsMeasures, ayesha2020OverviewComparativeStudy}.
Further, we define a Stress-local (Eq. \ref{eq: local-stress}), as a similar metric that only considers relations of domain-neighbors.
The latter metric aligns closer with the aim of our methods, as NOFE is set up to retain local statistics.
While results in Table \ref{tab: stress and patch-stitching} show that PCA preserves feature distances best globally, NOFE remains competitive on Stress-1 and yields the lowest stress values on local neighbors.
\begin{subequations}
\begin{minipage}{0.45\textwidth}
\begin{equation}
\mathrm{Stress\text{-}1}=\sqrt{\frac{\sum_{i<j}\left(d_{ij}^y-d_{ij}^z\right)^2}{\sum_{i<j} (d_{ij}^y)^2}}
\label{eq: stress}
\end{equation}
\end{minipage}
\hfill
\begin{minipage}{0.49\textwidth}
\begin{equation}
\mathrm{Stress\text{-}local}=\sqrt{\frac{\sum_{(i,j)\in E}\left(d_{ij}^y-d_{ij}^z\right)^2}{\sum_{(i,j)\in E} (d_{ij}^y)^2}}
\label{eq: local-stress}
\end{equation}
\end{minipage}
\end{subequations}

\begin{wrapfigure}{r}{0.52\textwidth}
\centering

\begin{minipage}{0.50\textwidth}
\centering

\includegraphics[
width=\linewidth,
clip
]{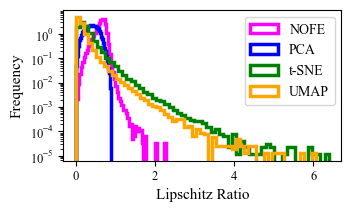}

\captionof{figure}{Distribution of Lipschitz ratios $r_L(i,j)$ (see Eq.~\ref{eq: Lipschitz-ratio}) over neighboring points $(i,j)\in E$ for embeddings of January 2019.}
\label{fig: lipschitz-ratio}
\end{minipage}
\end{wrapfigure}

Related to the preservation of local feature distances, we evaluate the preservation of continuity in the low-dimensional space.
Key for a meaningful embedding is not simply being as continuous as possible, as this would be achieved by mapping all points to the same feature value, but rather to reflect the continuity of the input data accurately.
To this end, we take inspiration from the Lipschitz continuity criterion \cite{folland1999RealAnalysisModerna} to define a pairwise Lipschitz continuity measure (Eq. \ref{eq: pw-continuity}) and the Lipschitz ratio (Eq. \ref{eq: Lipschitz-ratio}).
The Lipschitz ratio $r_L(i,j)$ compares how continuous the transition between points $i$ and $j$ is in the low-dimensional space relative to the high-dimensional space.
Since the spaces have different dimensions and the scaling is not normalized, the absolute values of this ratio are not comparable, but the statistics of these values are.
If the continuity of the high-dimensional data is reflected faithfully in the low-dimensional data, the Lipschitz ratios for point pairs $(i,j)$ will be distributed narrowly around a common value.
Figure \ref{fig: lipschitz-ratio} shows the distributions of Lipschitz ratios $r_L(i,j)$ for all $(i,j) \in E$ accumulated over predictions for every day of January 2019 on a semi-logarithmic scale.
The results show the ratios for t-SNE and UMAP have large peaks at values close to 0, with a heavy tail to larger values above 6.
PCA yields a very narrow distribution of values between 0 and 1, while NOFE's values lie mostly between 0 and 2 with a slightly heavy tail for the larger values.
Similar to previous results, PCA performs very well on this metric directly followed by NOFE, while t-SNE and UMAP show a weak consistency for continuity preservation.

\begin{subequations}
\begin{minipage}{0.45\textwidth}
\begin{equation}
L_{ij}^f = \frac{|f(x_i) - f(x_j)|}{|x_i - x_j|}
\label{eq: pw-continuity}
\end{equation}
\end{minipage}
\hfill
\begin{minipage}{0.49\textwidth}
\begin{equation}
r_L(i,j) = \frac{L_{ij}^g}{L_{ij}^f} = \frac{|z_i - z_j|}{|y_i - y_j|}.
\label{eq: Lipschitz-ratio}
\end{equation}
\end{minipage}
\end{subequations}

\subsection{Restriction and Gluing Properties} \label{sec: gluing}
Following our theoretical setup, we interpret NOFE as an operator approximating a sheaf morphism between a sheaf of high-dimensional functions to a sheaf of low-dimensional functions.
From this interpretation follows, based on the properties of sheaves (\textit{restriction} and \textit{gluing}) and sheaf morphisms, that for any two subsets $U_k, U_l \subset \mathcal{M}$ with an intersection  $U_{i}=U_k \cap U_l$, the embeddings $z_i \forall i \in U_{i}$ must agree on the overlap, i.e. they must satisfy $R_k y_i = R_l y_i  \forall i \in U_{i}$.
We test this hypothesis, by defining two overlapping regions $A$ and $B$.
For both regions we sample data points of the same date and embed them separately.
Merging the two datasets together and plotting them on one map gives us a qualitative estimate of whether the overlaps agree, or not.
The results are shown in Figure \ref{fig: gluing}. For NOFE, the combined image of the two regions is indistinguishable from one that has been created from a single dataset, which strongly supports that the sheaf interpretation holds.
For PCA, t-SNE and UMAP on the other hand, not only is the region of overlap clearly visible, but also it is clearly distinguishable which individual points belong to the same section, proving that these approaches do not satisfy the criteria of the sheaf interpretation. 

\begin{figure}[h]
    \centering
    \begin{subfigure}{0.24\textwidth}
        \centering
        \includegraphics[width=\linewidth]{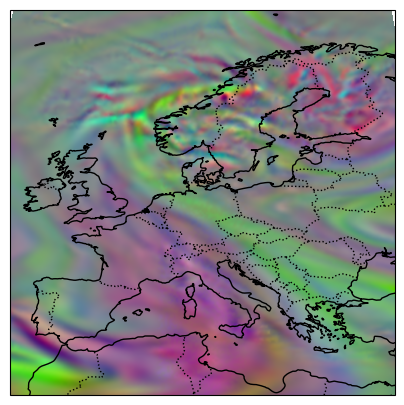}
        \caption{NOFE}
        \label{fig:first}
    \end{subfigure}
    \hfill
    \begin{subfigure}{0.24\textwidth}
        \centering
        \includegraphics[width=\linewidth]{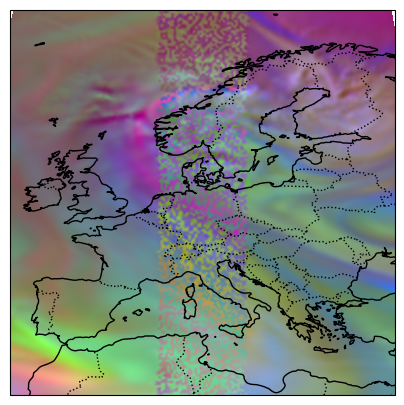}
        \caption{PCA}
        \label{fig:second}
    \end{subfigure}
    \hfill
    \begin{subfigure}{0.24\textwidth}
        \centering
        \includegraphics[width=\linewidth]{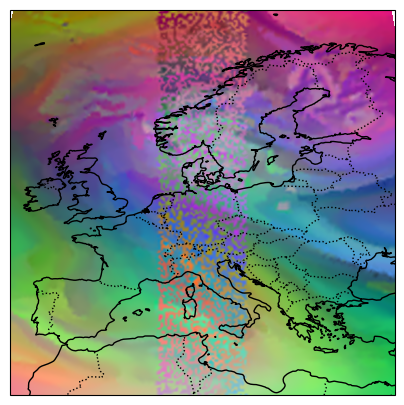}
        \caption{t-SNE}
        \label{fig:second}
    \end{subfigure}
    \hfill
    \begin{subfigure}{0.24\textwidth}
        \centering
        \includegraphics[width=\linewidth]{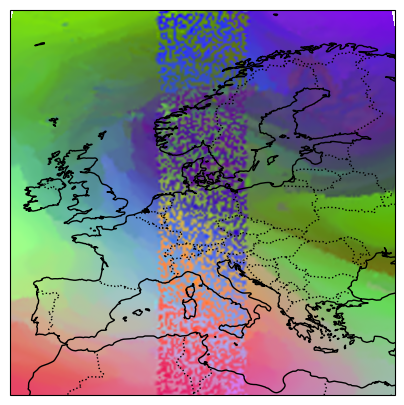}
        \caption{UMAP}
        \label{fig:second}
    \end{subfigure}

    \caption{Qualitative experiment to test the gluing properties of low-dimensional embeddings after separately reducing two overlapping sections of data points.}
    \label{fig: gluing}
\end{figure}
\begin{wraptable}{r}{0.4\textwidth}
  \footnotesize
  \caption{Gluing MSE errors.}
  \label{tab: gluning-errors}
  \centering
  \begin{tabular}{llll}
    \toprule
    NOFE & PCA & t-SNE & UMAP \\
    \midrule
    \textbf{0.080} & \underline{1.369}  & 1.642 & 1.827 \\
    $\pm$ 0.047 & $\pm$ 0.701 & $\pm$0.586 & 0.884 \\
    \bottomrule
  \end{tabular}
\end{wraptable}
For a quantitative evaluation of the gluing property, we run this experiment again, using all available data points, such that every point within the overlap region is embedded twice, i.e. once per patch.
To evaluate the agreement of the overlapping embeddings, we apply mean squared error $\mathrm{MSE}(R_k y_i, R_l y_i)  \forall i \in U_{i}$.
In order to make results between methods comparable, the embeddings are normalized to mean 0 and standard deviation 1 beforehand.
Averaged over samples from January 2019, gluing errors for NOFE are significantly lower than for remaining methods and particularly UMAP shows large errors (see Table \ref{tab: gluning-errors}).

\subsection{Grayscale correlation}
We provide here an analysis of the embeddings with respect to the structure of the individual input features, which carry physical meaning for the ERA5, to get a better understanding of what the different methods focus on. This is done by calculating the Pearson correlation coefficients \cite{PearsonCorrelationCoefficient} between the embeddings and the different input features (see Appendix \ref{app: feature correlation}). The full table of correlation coefficients can be found in  Table \ref{tab:grayscale}. We interpret the correlations with caution. It may appear as if NOFE correlates somewhat more with how the atmosphere is shaped and moving (local properties), while e.g. PCA may correlate somewhat more with physical structure and energy (global properties).

\subsection{Super-Resolution} \label{sec: superresolution}
After comparison to standard methods in the established point-to-point reduction setting, we will now demonstrate NOFE's unique capability of predicting reduced function values $z_q$ at arbitrary query locations $x_q$, even if the high-dimensional data for that location is not provided.
We sample a set of $N_i$ input points and define a set $X_q$ of $N_q$ output locations, with $N_q > N_i$.
For visualization we choose $N_i=400, N_q=5000$ and for quantitive experiments $N_i=100, N_q=1000$.
While NOFE can predict embeddings $z_q$ at any location directly, existing methods rely on embeddings $z_i$ from location given in the data from which query embeddings $z_q$ can only be interpolated.
Figure \ref{fig: superresolution} shows NOFE embeddings for high resolution query locations $X_q$ based on low resolution input locations $X_i$.
For comparison, we show results of PCA reduction applied on both low and high resolution data samples.
The low resolution PCA results appear blurry with clear artifacts from interpolation, replaced with sharper and more detailed structures when PCA is provided with the actual high resolution query locations.
NOFE on the other hand, while only using the low resolution input, does not show any artifacts and, while being slightly blurred and not resolving fine details, the prediction seems much more natural and resembles structures seen in Figure \ref{fig: superresolution-third} closer. 
More visualizations of super-resolution embeddings are provided in Appendix \ref{app: Superresolution}.
\begin{figure}[h]
    \centering
    \begin{subfigure}{0.25\textwidth}
        \centering
        \includegraphics[width=\linewidth]{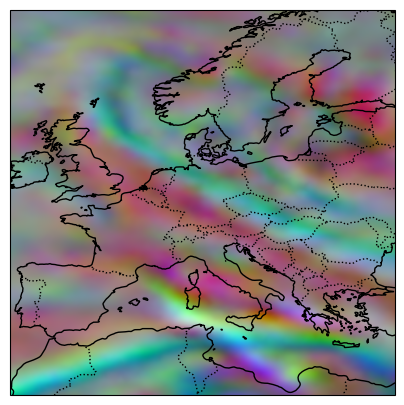}
        \caption{NOFE}
        \label{fig: superresolution-first}
    \end{subfigure}
    \hfill
    \begin{subfigure}{0.25\textwidth}
        \centering
        \includegraphics[width=\linewidth]{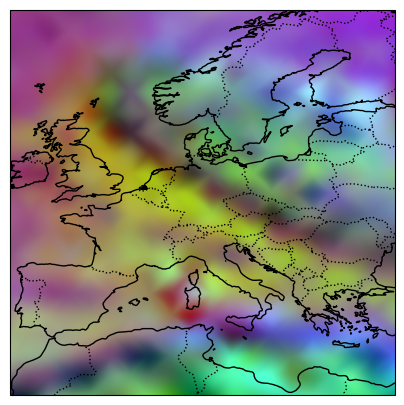}
        \caption{PCA ($N_i=400$)}
        \label{fig: superresolution-second}
    \end{subfigure}
    \hfill
    \begin{subfigure}{0.25\textwidth}
        \centering
        \includegraphics[width=\linewidth]{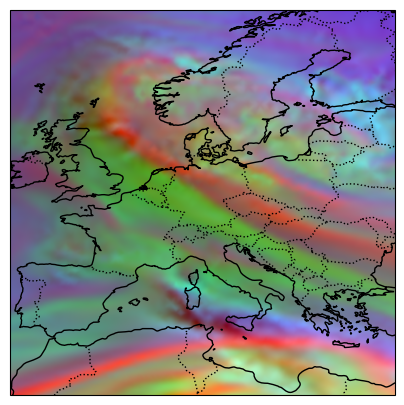}
        \caption{PCA ($N_i=5000$)}
        \label{fig: superresolution-third}
    \end{subfigure}
    \caption{NOFE applied in super-resolution setting, mapping data from a set of $N_i = 400$ input locations $X_i$ to $N_q = 5000$ query locations $X_q$ (Figure \ref{fig: superresolution-first}). For comparison we see PCA results based on the same input locations (Figure \ref{fig: superresolution-second}) and results for PCA using the values from the 5000 query locations $X_q$ as input (Figure \ref{fig: superresolution-third}).}
    \label{fig: superresolution}
\end{figure}
For a numerical evaluation, we calculate local and global stress values for all four methods based on predictions for query locations on samples from January 2019, seen in Table \ref{tab: superresolution stress}.
Results align with previous observations, showing NOFE outperforming t-SNE and UMAP on both, and PCA on the local stress metric.
\begin{table}[h]
  \caption{Feature distance preservation for super-resolution embeddings.}
  \label{tab: superresolution stress}
  \centering
  \begin{tabular}{lllll}
    \toprule
    & NOFE & PCA & t-SNE & UMAP \\
    \midrule
	Stress-1 & \underline{0.466} $\pm$ 0.026 & \textbf{0.388} $\pm$ 0.020  & 1.356 $\pm$ 0.247 & 0.604  $\pm$ 0.1437 \\
    Stress-local & \textbf{0.560} $\pm$ 0.530 & \underline{1.262} $\pm$ 1.109  & 2.330 $\pm$ 4.102 & 2.573 $\pm$ 4.770 \\
    \bottomrule
  \end{tabular}
\end{table}

\section{Conclusion}
In this work we introduced NOFE, a domain-aware framework that reformulates dimensionality reduction as a sheaf morphism, moving beyond point-cloud embeddings to learn function-to-function mappings that respect continuous domain structure. Through graph kernel operators, NOFE achieves mesh-free, resolution-independent evaluation at arbitrary query locations, bridging neural operator learning with sheaf-theoretic representations. Empirically, NOFE significantly outperforms classical baselines in preserving local geometry and sampling independence, while confirming its theoretical role as an approximate stalk-dimension-reducing operator on continuous sheaves.
\textbf{Limitations:} Despite its novel formulation, our method’s scaling depends on spatial points and neighborhood size, leading to increased memory and training time in high-resolution domains (e.g., $180\times180$ grids), particularly when using the extended subgraph approach. Furthermore, performance is sensitive to hyperparameters (\textit{k}, message passing iterations \textit{T}), necessitating dataset-specific tuning. Theoretically, the method relies on a meaningful distance metric for graph construction; its effectiveness diminishes in highly non-Euclidean manifolds where intrinsic geometry must be approximated via geodesic distances. 
Looking forward, we aim to extend NOFE to more complex topological domains, integrate adaptive mesh refinement, and explore its potential in other tasks and settings.

\newpage
\bibliographystyle{plainnat}   
\bibliography{NOFE_new}      

\appendix

\section{Sheaves, Cellular Sheaves, and Sheaf Neural Networks}
\label{app:sheaf_snn_appendix}

\subsection{Sheaves and Restriction Maps}

Let $X$ be a topological space. A presheaf $\mathcal{F}$ assigns to every open set $U \subseteq X$ a vector space $\mathcal{F}(U)$, whose elements are called sections over $U$. Intuitively, these sections represent data or signals defined on the domain $U$.

For every inclusion of open sets $V \subseteq U$, a presheaf is equipped with a linear map
\[
\rho_{V,U} : \mathcal{F}(U) \to \mathcal{F}(V),
\]
called a \emph{restriction map}. The restriction map encodes how global information is observed locally: a section defined on a larger domain $U$ induces a consistent description on any subregion $V$ by forgetting information outside of $V$.

From this perspective, restriction maps formalize the idea that data is inherently multi-scale, and that local observations are induced from global structure rather than defined independently.

A presheaf becomes a sheaf if locally consistent data uniquely determines a global object. In particular, if a collection of sections agrees on overlaps, then there exists a unique global section that restricts to them. This property enforces that local compatibility is sufficient for global reconstruction. \cite{curry2014SheavesCosheavesApplications, bredon1967SheafTheory}

\subsection{Cellular Sheaves on Graphs}

A cellular sheaf is a discretization of the above construction on a graph $G=(V,E)$. Each node $v \in V$ is equipped with a vector space $\mathcal{F}(v)$, and each edge $e \in E$ with a vector space $\mathcal{F}(e)$. In addition, for every incidence $v \trianglelefteq e$, there exists a linear restriction map
\[
\mathcal{F}_{v \trianglelefteq e} : \mathcal{F}(v) \to \mathcal{F}(e).
\]

These maps determine how node-level representations are compared in a shared edge space. In this sense, edges are not merely connectivity relations, but structured comparison spaces in which node features are projected.

For an edge $e_{ij}$ connecting nodes $v_i$ and $v_j$, consistency of the sheaf structure is expressed by the constraint
\[
\mathcal{F}_{v_i \trianglelefteq e_{ij}} x_i
=
\mathcal{F}_{v_j \trianglelefteq e_{ij}} x_j.
\]

The deviation from this constraint defines the coboundary operator
\[
(\delta x)_{e_{ij}} =
\mathcal{F}_{v_i \trianglelefteq e_{ij}} x_i
-
\mathcal{F}_{v_j \trianglelefteq e_{ij}} x_j,
\]
which measures disagreement between node representations after projection into the shared edge space.

The induced sheaf Laplacian is given by
\[
L_{\mathcal{F}} = \delta^\top \delta,
\]
and defines a diffusion operator over the sheaf. This generalizes the graph Laplacian to settings where edge-wise comparison is structured by nontrivial linear maps, and includes the connection Laplacian as a special case when restriction maps are orthogonal. \cite{hansen2020SheafNeuralNetworks, barbero2022SheafNeuralNetworks, gebhart2022GraphConvolutionalNetworks}

\subsection{Sheaf Neural Networks}

Sheaf Neural Networks (SNNs) replace standard graph diffusion with sheaf-induced diffusion. A single layer can be written as
\[
H^{(t+1)} =
\left(I - \alpha D^{-1/2} L_{\mathcal{F}} D^{-1/2}\right) H^{(t)},
\]
where $D$ denotes the block-diagonal degree operator associated with $L_{\mathcal{F}}$.

This formulation can be rewritten in message passing form as
\[
h_i^{(t+1)} =
\sigma \left(
W^{(t)} h_i^{(t)}
+
\sum_{j \in N(i)}
W^{(t)}
\mathcal{F}_{v_i \trianglelefteq e_{ij}}^\top
\mathcal{F}_{v_j \trianglelefteq e_{ij}}
h_j^{(t)}
\right).
\]

In this expression, 
\[
\phi(e_{ij}) =
\mathcal{F}_{v_i \trianglelefteq e_{ij}}^\top
\mathcal{F}_{v_j \trianglelefteq e_{ij}}
\]
acts as an edge-induced consistency operator that determines how information is transferred between nodes after being lifted into the edge space.

From this viewpoint, SNNs replace the unconstrained aggregation mechanism of standard graph neural networks with a structured diffusion process governed by local compatibility constraints. The restriction maps determine the geometry of comparison between neighboring node features, and thereby induce a notion of coherence that is stronger than adjacency alone. \cite{hansen2020SheafNeuralNetworks, barbero2022SheafNeuralNetworks, gebhart2022GraphConvolutionalNetworks, bodnar2022NeuralSheafDiffusion}

\subsection{Interpretation}

SNNs can thus be understood as diffusion processes on structured bundles over graphs, where edge spaces define local coordinate systems for comparison of node features. The restriction maps determine how information is transported into these local coordinates, and the Laplacian measures inconsistency of these transported representations. \cite{bodnar2022NeuralSheafDiffusion}

This perspective makes explicit that the expressive power of SNNs does not only arise from message passing, but from the geometry induced by the restriction maps, which define how local representations are embedded into shared comparison spaces.

\subsection{Connection to Vector Diffusion Maps}

Sheaf-based diffusion operators are closely related to the manifold learning framework of Vector Diffusion Maps (VDM) \cite{singer2011VectorDiffusionMaps}, which extend Diffusion Maps \cite{coifman2006DiffusionMaps} to settings with vector-valued features and transport along edges. In both cases, the central object is not only a notion of connectivity, but a notion of \emph{consistent transport} of representations across the graph.

In Vector Diffusion Maps, one considers a point cloud $\{x_i\}_{i=1}^n \subset \mathbb{R}^D$ sampled from an underlying manifold $\mathcal{M}$. A weighted graph $G=(V,E)$ is constructed to approximate the local geometry of $\mathcal{M}$, and each edge is equipped with an orthogonal transformation $O_{ij} \in O(d)$ encoding a local estimate of alignment between tangent-space coordinates at $x_i$ and $x_j$.

The resulting diffusion operator models how vectors propagate consistently over the manifold geometry by first transporting neighboring contributions into a common coordinate system and then aggregating them according to scalar affinities. 

From this perspective, VDM defines a discretization of diffusion on a vector bundle over $\mathcal{M}$, where both the geometry of the manifold and the consistency of local coordinate systems are encoded through the graph structure and the edge-wise orthogonal transformations.

In particular, VDM corresponds to the rigid case where transport is fixed and orthogonal, whereas SNNs allow the transport structure itself to be learned or constrained, thereby extending diffusion-based geometric learning to more general consistency structures beyond Riemannian or orthogonal settings. \cite{bodnar2022NeuralSheafDiffusion}

This connection highlights that SNNs inherit the geometric interpretation of diffusion on bundles from Vector Diffusion Maps, while significantly generalizing the notion of admissible local structure through the flexibility of sheaf-theoretic restriction maps.

We see NOFE as a sheaf based framework, performing mappings in alignment with sheaf structure. Thus, we can adapt above interpretations of message passing over the graph as a learned vector diffusion on a vector bundle over $\mathcal{M}$. While both, NOFE and VDMs estimate this diffusion process for dimensionality reduction, they cannot be compared head-to-head due to conceptually different problem framing. VDMs are used for manifold learning; they estimate how a vector field would diffuse over a manifold, to learn the structure of the underlying manifold itself and find embeddings for its elements $x_i \in \mathcal{M}$. In contrast, NOFE aims to find embeddings of an actual vector field realization, with elements $y(x_i)$ from a vector-valued function $f : \mathcal{M} \rightarrow \mathbb{R}^{d_f}$ defined over the manifold. To this end, it performs, or simulates an actual diffusion process through message passing and finally embeds the function values of $f$ in a lower-dimensional space, represented by a new function $g: \mathcal{M} \rightarrow \mathbb{R}^{d_g}$ with $d_g < d_f$, defined over the same manifold.

\section{Implementation Details} \label{app: implementation}

NOFE uses a GKO approach, which requires data in a graph structure. The graph is constructed based on the domain structure of sample points $x$. For a simple point-to-point correspondence $X_i = X_q$ between locations $X_i$ of input samples and query locations $X_q$, a graph is constructed using the set $X_i$. Each sample point is represented by a node $i$, function values are represented by node features. Graph edges are defined based on domain distance, with neighbors of node $i$ defined as $N(i) = \{j \mid x_j \in B_r(x_i)\}$ where $B_r(x)$ denotes a hypersphere of radius $r$ around $x$ in $\mathcal{M}$. However, for practical reasons we choose to implement neighborhood based on a $k$-nearest-neighbor search \cite{KNearestNeighborsAlgorithm}. This breaks locality as per Definition \ref{def: locality}, which was stated in Proposition \ref{prop: locality} and only holds in a graph-local sense\footnote{This is merely for computational convenience, as for fixed neighborhoods $B_r(x_i)$, the number of neighbors and therefore, the computational cost increases drastically with increased sampling density. While properties like locality might not hold anymore in a strict mathematical sense, this has no practical impact.}. Mathematical locality is recovered in the limit of dense sampling.
Pseudocode for this setting, including graph construction, forward pass and loss calculation is given in Algorithm \ref{alg: reg NOFE}.

\begin{algorithm}[h!] 
\caption{NOFE for predictions with point-to-point correspondence.}
\label{alg: reg NOFE}
\begin{algorithmic}[1]
\Require $D=\{(x_i,y_i)\}_{i=1}^N$, neighbors $k$, iterations $T$
\Ensure Embeddings $z_i=g(x_i)$
\State \textbf{Graph Construction:}
\State $X\gets\{x_i\}_{i=1}^N$, $E\gets\{(i,j):j\in N(i)\}$ ($N(i)$: $k$NN of $x_i$)
\State $e_{ij}\gets\{x_i,x_i-x_j\}\quad\forall(i,j)\in E$
\State
\State \textbf{Input Affinities:}
\State $p_{ij}\gets\frac{(1+\|y_i-y_j\|^2)^{-1}}{\sum_{(i,l)\in E,l\neq j}(1+\|y_i-y_l\|^2)^{-1}}$
\State
\State \textbf{Forward Pass:}
\State $h_i^{(0)}\gets L y_i$ \Comment{Lift: $L:\mathbb{R}^{d_f}\to\mathbb{R}^{d_h}$}
\State \text{MLP denotes a multilayer perceptron.}
\For{$t=0$ \textbf{to} $T-1$}
\State $K(i,j)\gets\text{MLP}(e_{ij})$ \Comment{Kernel: $\mathbb{R}^{d_e}\to\mathbb{R}^{d_h\times d_h}$}
\State $h_i^{(t+1)}\gets\text{ReLU}\Big(W\cdot h_i^{(t)} +$
\Statex\hskip\algorithmicindent\hskip\algorithmicindent$\sum_{j\in N(i)}K(i,j)\cdot h_j^{(t)}\Big)$
\EndFor
\State $z_i\gets P\cdot h_i^{(T)}$ \Comment{Project: $P:\mathbb{R}^{d_h}\to\mathbb{R}^{d_g}$}
\State
\State \textbf{Loss:}
\State $q_{ij}\gets\frac{(1+\|z_i-z_j\|^2)^{-1}}{\sum_{(i,l)\in E,l\neq j}(1+\|z_i-z_l\|^2)^{-1}}$
\State $\mathcal{L}\gets\sum_{i\neq j}p_{ij}\log\frac{p_{ij}}{q_{ij}}$, minimize via AdamW
\end{algorithmic}
\end{algorithm}

When using NOFE to predict embeddings at locations $X_q \notin X_i$, two subgraphs $A$ and $B$ are constructed. Graph $A$ is identical to the graph described previously for $X_i = X_q$. Graph $B$ uses the query locations $X_q$ to construct nodes and edges. However, it cannot use function values as node features, as they are per definition unknown and have to be initialized differently. We choose to initialize them by linear interpolation from known function values over $X_i$. Finally, every node $i_B$ of graph  $B$ is connected to its $k$-nearest-neighbors in Graph $A$. Edges are directed, allowing flow of information only within $A$ and $B$ as well as from $A$ to $B$, not from $B$ to $A$, as only nodes of $B$ will be used for prediction and all information stored in node features of $B$, originated from $A$. A corresponding pseudocode is given in Algorithm \ref{alg: ext NOFE}.

\begin{algorithm}[h!]
\caption{NOFE for predictions at arbitrary query locations.}
\label{alg: ext NOFE}
\begin{algorithmic}[1] 
\Require Source $D_s=\{(x_i^s,y_i^s)\}_{i=1}^M$, target $X_t=\{x_j^t\}_{j=1}^K$, $k$, $k_{cross}$, $T$
\Ensure Target embeddings $z_j^t=g(x_j^t)$
\State \textbf{Graph Construction:}
\State $E_s\gets\{(i,j):j\in N_s(i)\}$, $E_t\gets\{(p,q):q\in N_t(p)\}$
\State $E_{cross}\gets\{(p,i):i\in N_{cross}(p)\}$
\State Interpolate $y_p^t\gets\text{IDW}(Y_s,X_s,x_p^t,E_{cross})$
\State
\State \textbf{Input Affinities (Source):}
\State $p_{ij}^s\gets\frac{(1+\|y_i^s-y_j^s\|^2)^{-1}}{\sum_{(i,l)\in E_s,l\neq j}(1+\|y_i^s-y_l^s\|^2)^{-1}}$
\State
\State \textbf{Forward Pass:}
\State $h_i^{s(0)}\gets L_s y_i^s$, $h_p^{t(0)}\gets L_t y_p^t$
\State \text{MLP denotes a multilayer perceptron.}
\For{$t=0$ \textbf{to} $T-1$}
\State \Comment{Source intra-message passing}
\State $K_s(i,j)\gets\text{MLP}(e_{ij}^s)$
\State $h_i^{s(t+1)}\gets\text{ReLU}\Big(W_s\cdot h_i^{s(t)} +$
\Statex\hskip\algorithmicindent\hskip\algorithmicindent$\sum_{j\in N_s(i)}K_s(i,j)\cdot h_j^{s(t)}\Big)$
\State \Comment{Target intra-message passing}
\State $K_t(p,q)\gets\text{MLP}(e_{pq}^t)$
\State $h_p^{t(t+1)}\gets\text{ReLU}\Big(W_t\cdot h_p^{t(t)} +$
\Statex\hskip\algorithmicindent\hskip\algorithmicindent$\sum_{q\in N_t(p)}K_t(p,q)\cdot h_q^{t(t)}\Big)$
\State \Comment{Cross-message passing (source $\to$ target only)}
\State $K_{cross}(p,i)\gets\text{MLP}(e_{pi}^{cross})$
\State $h_p^{t(t+1)}\gets\text{ReLU}\Big(h_p^{t(t+1)} +$
\Statex\hskip\algorithmicindent\hskip\algorithmicindent$\sum_{i\in N_{cross}(p)}K_{cross}(p,i)\cdot h_i^{s(t+1)}\Big)$
\EndFor
\State $z_p^t\gets P\cdot h_p^{t(T)}$ \Comment{Target-only output}
\State
\State \textbf{Loss:}
\State $q_{pq}^t\gets\frac{(1+\|z_p^t-z_q^t\|^2)^{-1}}{\sum_{(p,r)\in E_t,r\neq q}(1+\|z_p^t-z_r^t\|^2)^{-1}}$
\State $\mathcal{L}\gets\sum_{p\neq q}p_{pq}^t\log\frac{p_{pq}^t}{q_{pq}^t}$, minimize via AdamW
\end{algorithmic}
\end{algorithm}

\paragraph{Loss Function}
Similar to t-SNE we use the discrete Kullback-Leibler divergence (Eq. \ref{eq: KL divergence}) as loss function based on pairwise affinity calculations. However, unlike t-SNE, we use the student-t kernel (Eq. \ref{eq: affinity}) for both, high- and low-dimensional space, rather than a Radial-Basis-Function (RBF) for high-dimensional data. In t-SNE the choice of kernel functions is motivated mainly by reasons of computational optimization. Further t-SNE uses a perplexity approach to estimate per point bandwidths $\sigma$, defining the RBF kernel based on local neighborhoods in feature space. While a RBF kernel in our approach is thinkable, the choice of bandwidth $\sigma$ is non-trivial and would require additional research. Further, we have experienced that using an RBF kernel for the high-dimensional space is not effective as affinities often drop to zero even for spatial neighbors. Thus, these pairs do not contribute to the loss calculation and parameter optimization.

\paragraph{Model Parameters and Training.}
NOFE is mainly determined by the parameters shown in Table \ref{tab: model parameters}.
Choices given in the table refer to the model used in the experimental part on ERA5 data (Section \ref{sec: experiments}).
This corresponds to the setup of Model 2 in the later discussed ablation study.
The final model as well as all models in the ablation study have been trained with an initial learning rate of 0.00001; a learning rate scheduler (applying a factor of 0.5 every 10 epochs on the learning rate); and AdamW optimizer, over 25 epochs.

\begin{table}[h!]
\caption{Model parameters}
\centering
\begin{tabular}{l c p{8cm}}
\toprule
Parameter & Choice & Explanation \\
\midrule
$W$ & 64 & Dimension of lifting layer $L$; defines dimension of nodes in latent space during message passing.\\
$KW$ & 16 & Width of MLP's hidden layers for prediction of kernel matrix $K(i,j)$.\\
$KD$ & 2 & Number of MLP's hidden layers.\\
$k$ & 5 & Number of nearest neighbors used for message passing.\\
$T$ & 3 & Number of message passing iterations.\\
$P$ & 3 & Width of projection layer $P$; determines dimension of embedding space.\\

\bottomrule
\end{tabular}
\label{tab: model parameters}
\end{table}

\paragraph{Ablation Study.}
The set of model parameters given in Table \ref{tab: model parameters} leaves us with many possible combinations of architecture setup. For this ablation study we keep the number of layers $KD=2$ fixed, as we expect this to be sufficient deep. We also keep $k=5$ constant as the computational demand is very sensitive to increasing number of neighbors and the receptive field of a given node, i.e. the effective neighborhood size of a node also increases with number of message passing steps, which we include in the ablation study. For each of the remaining parameters, we select one smaller and one larger value and combine all permutations into the eight model configurations listed in Table \ref{tab: ablation}. All models have been trained on a NVIDIA GeForce RTX 3090 GPU.

\begin{table}[h!]
\caption{Parameter sweep.}
\centering
\begin{tabular}{lcccccc}
\toprule
Model & $W$ & $KW$ & $T$ & Training Loss & Validation Loss & Training (min.)\\
\midrule
Model 1  & 16 & 16 & 3 &  44.608 & 38.350 & 20 \\
Model 2  & 64 & 16 & 3 &  39.347 & \underline{33.026} & 44 \\
Model 3  & 16 & 64 & 3 &  44.418 & 38.127 & 32 \\
Model 4  & 64 & 64 & 3 &  \underline{39.232} & 33.471 & 56  \\
Model 5  & 16 & 16 & 6 &  44.972 & 38.802 & 36 \\
Model 6  & 64 & 16 & 6 &  40.305 & 33.990 & 81 \\
Model 7  & 16 & 64 & 6 &  44.651 & 38.539 & 30 \\
Model 8  & 64 & 64 & 6 &  \textbf{37.833} & \textbf{32.078}  & 108  \\
\bottomrule
\end{tabular}
\label{tab: ablation}
\end{table}

Based on the final training and validation loss values recorded in Table \ref{tab: ablation} it seems that the lifting dimension $W$ has by far the largest impact, as models with $W=64$, i.e. models with an even number, show generally lower losses. Remaining parameters seem to have negligible effects on the loss. Large number of message passing steps $T$ seems to have even negative effects on the loss minimization, even though the best performance is shown by Model 8.

However, the loss value is not the only metric to evaluate the model performance, so we repeated the experiments from section \ref{sec: experiments} on a fixed number of 5000 points per sample. Results for patch stitching errors are listed in Table \ref{tab: ablation patch stitching}; feature distance preservation metrics and gluing errors are shown in Table \ref{tab: ablation preservation and gluing}. Latter one also includes the Pearson correlation coefficient between features $y_i$ in high-dimensional and $z_i$ embedding-space. 

\begin{table}[h!]
\caption{Patch stitching errors compared across model configurations.}
\centering
\begin{tabular}{lcc}
\toprule
Model &  Region normalization & Neighbor normalization  \\
\midrule
Model 1  & \underline{0.829} $\pm 0.042$ & \textbf{21.976} $\pm$ 1.398 \\
Model 2  & \underline{0.829} $\pm$ 0.044 & 21.992 $\pm$ 1.345 \\
Model 3  & \underline{0.829} $\pm$ 0.042 & \textbf{21.976} $\pm$ 1.398 \\
Model 4  & 0.830 $\pm$ 0.044 & 22.129 $\pm$ 1.376\\
Model 5  & 0.830 $\pm$ 0.044 & 22.057 $\pm$ 1.335 \\
Model 6  & \underline{0.829} $\pm$ 0.043 & \underline{21.979} $\pm$ 1.309 \\
Model 7  & \textbf{0.828} $\pm$ 0.042 & 22.021 $\pm$ 1.317 \\
Model 8  & \underline{0.829} $\pm$ 0.042 & 22.065 $\pm$ 1.410 \\
\bottomrule
\end{tabular}
\label{tab: ablation patch stitching}
\end{table}

Both, Table \ref{tab: ablation patch stitching} and \ref{tab: ablation preservation and gluing}, show very consistent results across all models for all metrics, with almost every model performing best at one of the metrics. Therefore, it seems reasonable to choose one of the lighter variants for experiments. In this case we selected Model 2.

\begin{table}[h!]
\caption{Feature distance preservation compared across model configurations.}
\centering
\begin{tabular}{lcccc}
\toprule
Model &  Pairwise Corr. & Stress-1 & Stress-local  & Gluing MSE error\\
\midrule
Model 1  &  0.779 $\pm$ 0.036 &             0.387 $\pm$ 0.016 &             0.093 $\pm$ 0.107 			&	\underline{0.069} $\pm$ 0.041\\
Model 2  & \underline{0.790} $\pm$ 0.028 & \underline{0.380} $\pm$ 0.013 &  \textbf{0.013} $\pm$ 0.147 	&  $0.080 \pm 0.047$\\
Model 3  & 0.784 $\pm$ 0.035 &              0.390 $\pm$ 0.016 &             0.116 $\pm$ 0.104 			&  $0.076 \pm 0.047$\\
Model 4  & 0.789 $\pm$ 0.031 &      \textbf{0.379} $\pm$ 0.013 &            0.101 $\pm$ 0.107 			&  $0.082 \pm 0.048$\\
Model 5  & 0.780 $\pm$ 0.037 &              0.389 $\pm$ 0.016 &             0.127 $\pm$ 0.163 			&  $0.074 \pm 0.046$\\
Model 6  &  0.789 $\pm$ 0.032 &             0.383 $\pm$ 0.014 &             0.077 $\pm$ 0.071 			&  $0.076 \pm 0.042$\\
Model 7  & 0.783 $\pm$ 0.032 &              0.391 $\pm$ 0.013 &             0.157 $\pm$ 0.157 			&  \textbf{0.067} $\pm$ 0.038\\
Model 8  & \textbf{0.791} $\pm$ 0.031 &     \underline{0.380} $\pm$ 0.014 &  \underline{0.073} $\pm$ 0.097 & $0.082 \pm 0.049$\\
\bottomrule
\end{tabular}
\label{tab: ablation preservation and gluing}
\end{table}

\section{Super-Resolution} \label{app: Superresolution}
As demonstrated in Section \ref{sec: superresolution}, NOFE is capable of predicting embeddings for arbitrary query locations, while traditional methods require interpolation of the results to make any predictions. While we showed results for NOFE and PCA using sample of 400 points as inputs and compared them to results of PCA with access to data at the query locations, we could see that NOFE's prediction came much closer to the structures of the high-resolution PCA. 

Here, we want to showcase more results across different methods for increasing resolution of input samples.
For this purpose, we sample for datasets of $N_i = \{200, 400, 1000, 5000\}$ points.
NOFE predicts embeddings for $N_q=5000$ locations for all samples, while remaining methods can only find embedding for input locations. Results for all methods are interpolated on a regular grid ($500 \time 500$) and visualized as images as shown in Figure \ref{fig: superresolution_matrix}.

Apart from the misalignment of embedding dimensions (here seen as inconsistency of color schemes between embeddings of the same method), traditional methods PCA, t-SNE and UMAP clearly show interpolation artifacts for small $N_i$, leading to unnatural looking patterns in the resulting data structure. This is particularly evident for $N_i = \{200, 400\}$ across t-SNE and UMAP but also seen in PCA results. With increasing $N_i$, the results start looking more natural and reveal finer details, while interpolation artifacts vanish.
NOFE on the other hand yields very consistent structures for all input resolutions. Even though lower resolution inputs lead to slightly blurred outputs without finer details, the overall structure closely resembles the results for larger $N_i$. While structures in the images get more sharper for increasing number of sample points, the patterns in the embedding look much more natural for NOFE for small $N_i$ compared to traditional methods.

\begin{figure}[t]
    \centering

    \begin{tabular}{
        >{\centering\arraybackslash}m{0.01\linewidth}
        >{\centering\arraybackslash}m{0.2\linewidth}
        >{\centering\arraybackslash}m{0.2\linewidth}
        >{\centering\arraybackslash}m{0.2\linewidth}
        >{\centering\arraybackslash}m{0.2\linewidth}
    }

        & $N_i=200$ & $N_i=400$ & $N_i=1000$ & $N_i=5000$ \\

        \rotatebox[origin=c]{90}{NOFE} &
        \includegraphics[width=\linewidth]{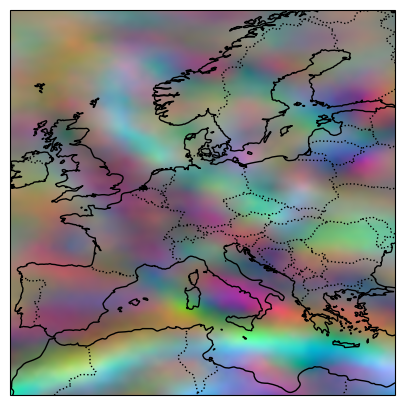} &
        \includegraphics[width=\linewidth]{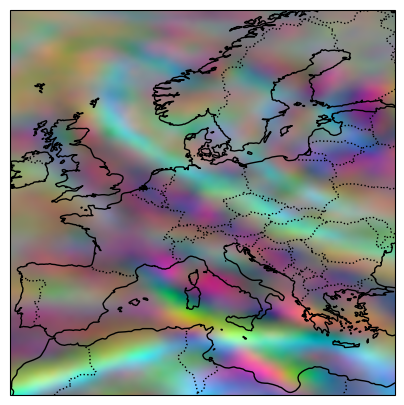} &
        \includegraphics[width=\linewidth]{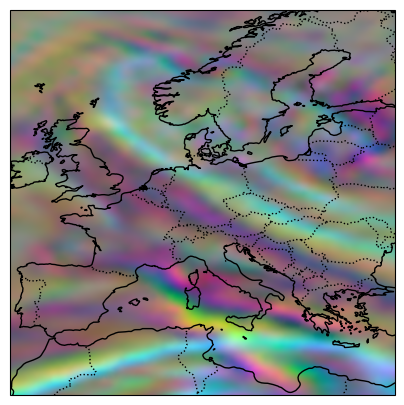} &
        \includegraphics[width=\linewidth]{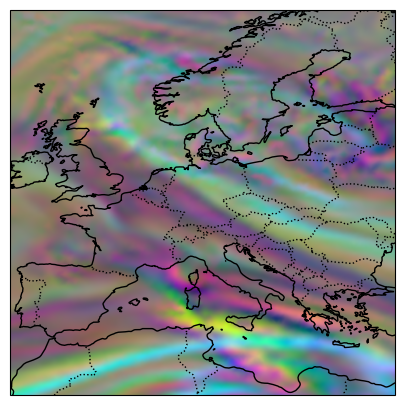}
        \\

        \rotatebox[origin=c]{90}{PCA} &
        \includegraphics[width=\linewidth]{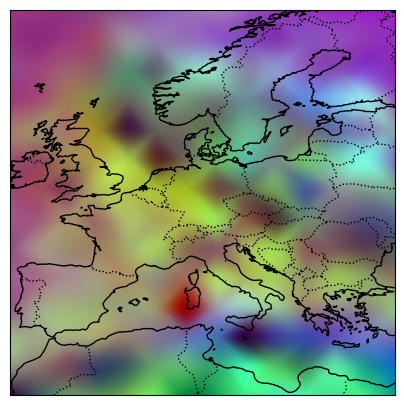} &
        \includegraphics[width=\linewidth]{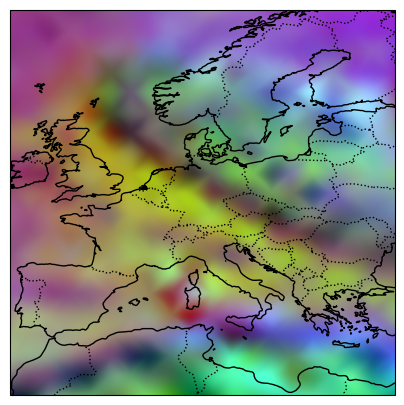} &
        \includegraphics[width=\linewidth]{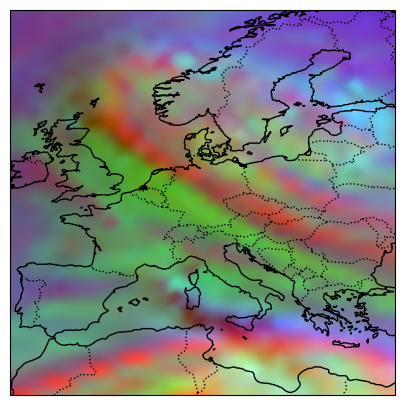} &
        \includegraphics[width=\linewidth]{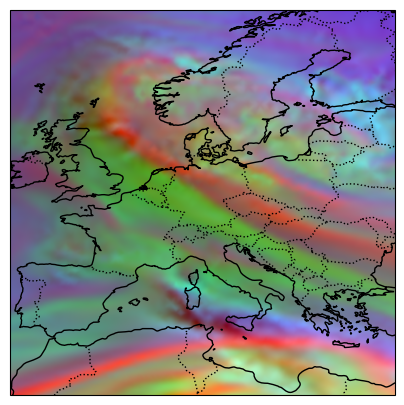}
        \\

        \rotatebox[origin=c]{90}{t-SNE} &
        \includegraphics[width=\linewidth]{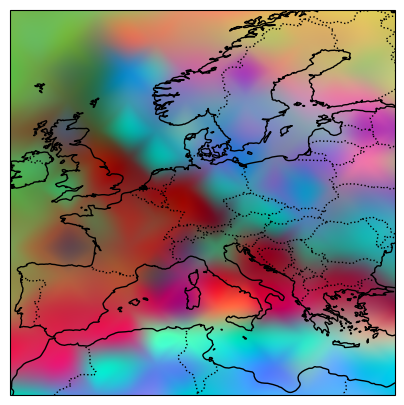} &
        \includegraphics[width=\linewidth]{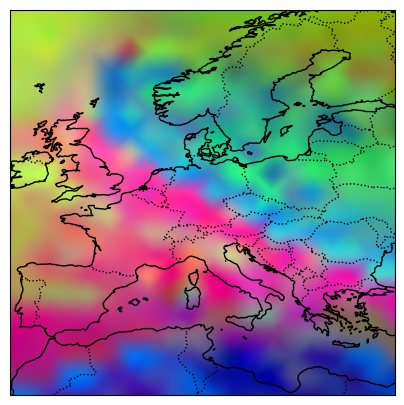} &
        \includegraphics[width=\linewidth]{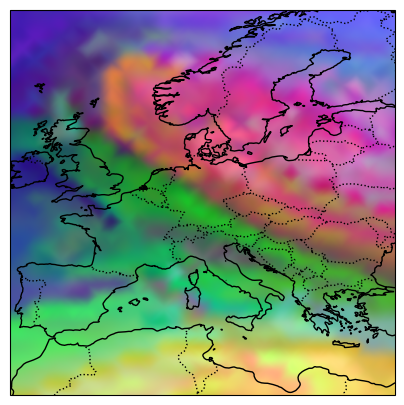} &
        \includegraphics[width=\linewidth]{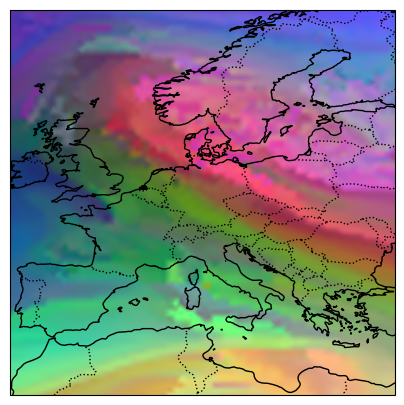}
        \\

        \rotatebox[origin=c]{90}{UMAP} &
        \includegraphics[width=\linewidth]{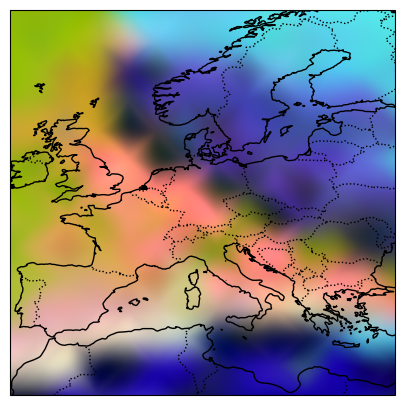} &
        \includegraphics[width=\linewidth]{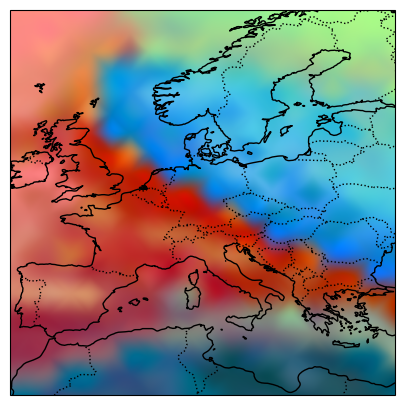} &
        \includegraphics[width=\linewidth]{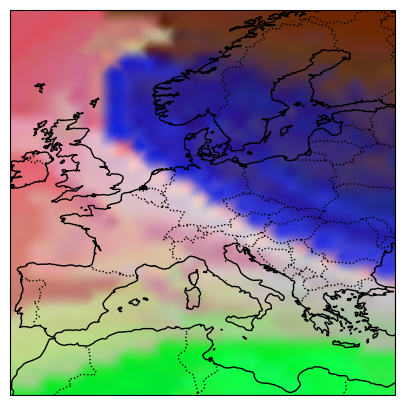} &
        \includegraphics[width=\linewidth]{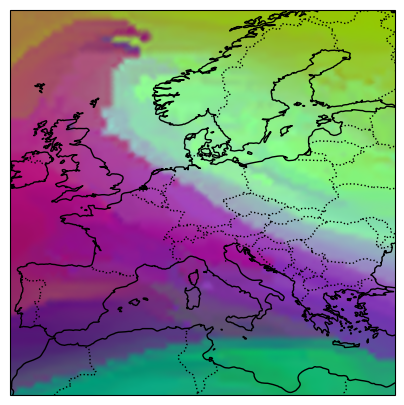}
        \\

    \end{tabular}

    \caption{Resolution comparison across methods for increasing number of input points $N$.
    While other methods can only embed the $N_i$ points provided to them, NOFE directly maps to increased resolution of $N_q=5000$ points. All results are interpolated on images with the same resolution ($500 \time 500$ pixels).
    }
    \label{fig: superresolution_matrix}
\end{figure}

\section{Diffusion MRI} \label{app: dMRI}
We evaluated NOFE on Diffusion 7T Preprocessed Recommended data from the Human Connectome Project Young Adult 2025 dataset~\cite{vanessen2012HumanConnectomeProject}. Each subject provides a brain-masked 4D diffusion MRI (Magnetic Resonance Imaging) volume, where the first three dimensions correspond to spatial voxel coordinates and the fourth dimension represents 143 measurement channels. Each of these channels represents a MRI based diffusion measurement with respect to different diffusion directions. The selected dataset contains 169 subjects. Due to storage limitations we use a subset of 16 patients to train and evaluate NOFE.

Training graphs were constructed by randomly sampling $N=10{,}000$ voxels from each subject's brain mask, with graph edges defined by the $k=10$ spatial nearest 
neighbors of each voxel. Each voxel was represented by its 143-dimensional dMRI channel signal as the node feature. For each of the selected subjects, we generated 50 training graphs and 20 validation graphs, yielding 800 training samples and 320 validation samples in total. All graph samples were stored in LMDB format. The model was trained for 100 epochs using an initial learning rate of 
$5 \times 10^{-4}$ with a StepLR scheduler using step size 5 and decay factor $\gamma=0.9$. The selected checkpoint (40,867 trainable parameters) reached a training loss of 223.26 and a validation loss of 204.78. Overall, the entire training process took around 5 hours on an Apple M4 10-core built-in GPU.

\begin{figure}[ht]
    \centering
    \includegraphics[width=1\linewidth]{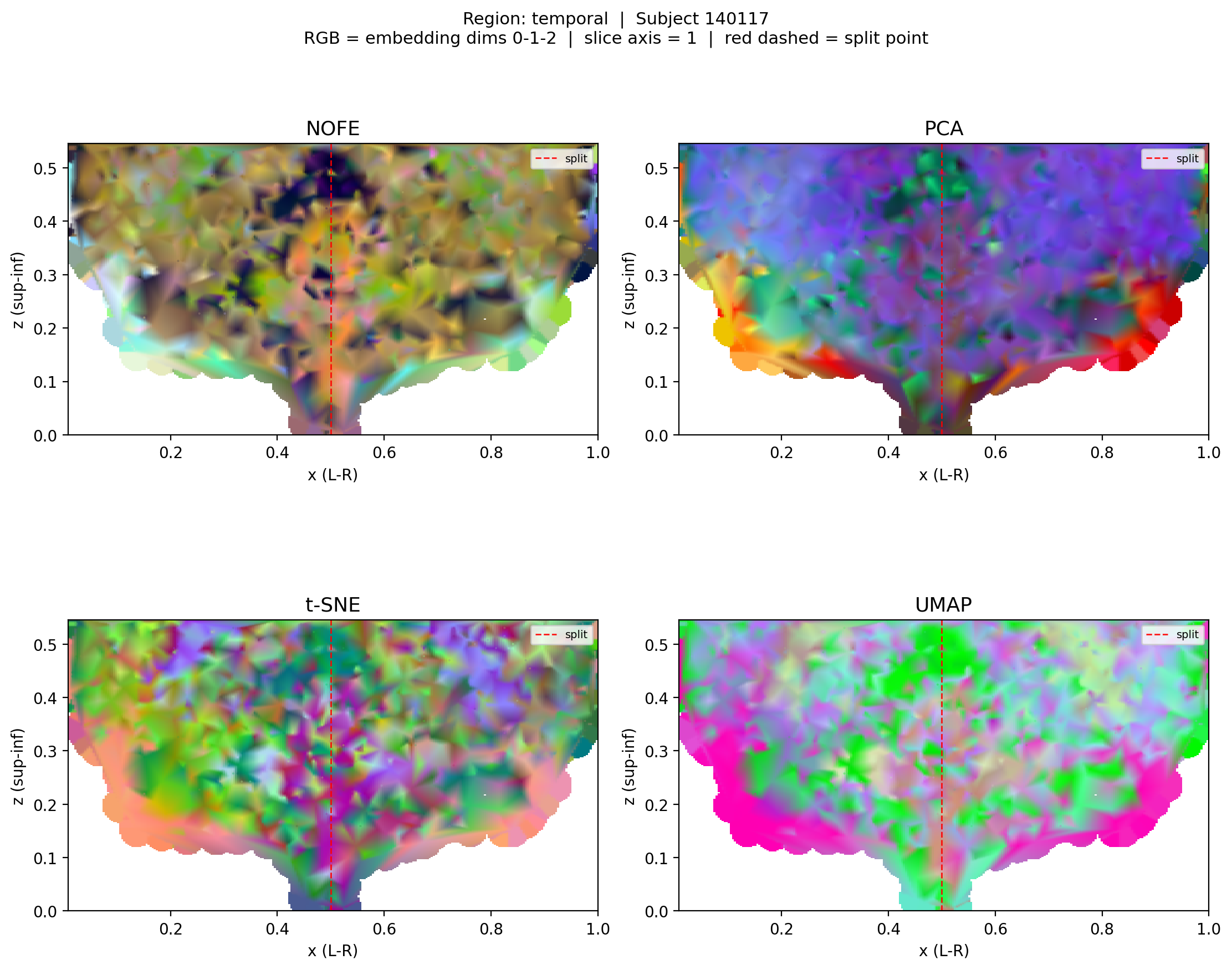}
    \caption{Patch stitching visualization in the temporal region.}
    \label{fig:patch_stitching_temporal}
\end{figure}

For the distance preservation experiment, we sampled 500 voxels from the frontal region of each held-out subject and embedded their 143-dimensional dMRI signals into three dimensions using NOFE, PCA, UMAP, and t-SNE. The results show that NOFE preserves the original feature-distance structure much better than UMAP and t-SNE, while PCA achieves the best scores in these metrics, as shown in Table~\ref{tab:distance_preservation_frontal}.

\begin{table}[ht]
\centering
\caption{Distance preservation results in the frontal region.}
\label{tab:distance_preservation_frontal}
\begin{tabular}{lccc}
\toprule
Method 
& Pairwise Corr. 
& Stress-1 
& Stress-local \\
\midrule
NOFE  & \underline{0.910} $\pm$ 0.031 & \underline{0.481} $\pm$ 0.062 & \underline{0.500} $\pm$ 0.060 \\
PCA   & \textbf{0.967} $\pm$ 0.008 & \textbf{0.253} $\pm$ 0.030 & \textbf{0.310} $\pm$ 0.033 \\
UMAP  & 0.651 $\pm$ 0.125 & 1.049 $\pm$ 0.155 & 1.070 $\pm$ 0.141 \\
t-SNE & 0.693 $\pm$ 0.113 & 6.167 $\pm$ 4.948 & 6.221 $\pm$ 5.204 \\
\bottomrule
\end{tabular}
\end{table}

When testing patch stitching errors, we evaluated spatial continuity by measuring whether neighboring voxels on opposite sides of a regional split remain close in the embedding space. Figure~\ref{fig:patch_stitching_temporal} and Table~\ref{tab:patch_stitching_temporal} demonstrate that NOFE reaches the lowest region-normalized distance, indicating smoother embeddings across the split, while t-SNE and UMAP exhibit larger boundary discontinuities.

\begin{table}[ht]
\centering
\caption{Patch stitching error results in the temporal region.}
\label{tab:patch_stitching_temporal}
\begin{tabular}{lcc}
\toprule
Method & Region normalization  & Neighbor normalization.  \\
\midrule
NOFE  & \textbf{0.765} $\pm$ 0.619      & \underline{2.656} $\pm$ 4.212 \\
PCA   & \underline{0.797} $\pm$ 0.664   & \textbf{2.490} $\pm$ 4.155 \\
t-SNE & 0.844 $\pm$ 0.470               & 2.805 $\pm$ 4.743 \\
UMAP  & 0.805 $\pm$ 0.592               & 3.567 $\pm$ 6.356 \\
\bottomrule
\end{tabular}
\end{table}

To evaluate the gluing property, we reported the mean normalized MSE across all selected brain regions and subjects. As shown in Table~\ref{tab:patch_gluing_temporal} and Figure~\ref{fig:patch_gluing_temporal}, NOFE obtains the lowest aggregate MSE by a clear margin, suggesting that its local embeddings satisfy the gluing condition more closely than those of the baseline methods.

\begin{table}[ht]
\centering
\caption{Mean normalized MSE across all evaluated brain regions and subjects.}
\label{tab:patch_gluing_temporal}
\begin{tabular}{lcccc}
\toprule
Metric & NOFE & PCA & t-SNE & UMAP \\
\midrule
MSE 
& \textbf{0.003} $\pm$ 0.002 
& \underline{0.247} $\pm$ 0.494 
& 1.530 $\pm$ 2.656 
& 6.620 $\pm$ 6.949 \\
\bottomrule
\end{tabular}
\end{table}

\begin{figure}[ht]
    \centering
    \includegraphics[width=1\linewidth]{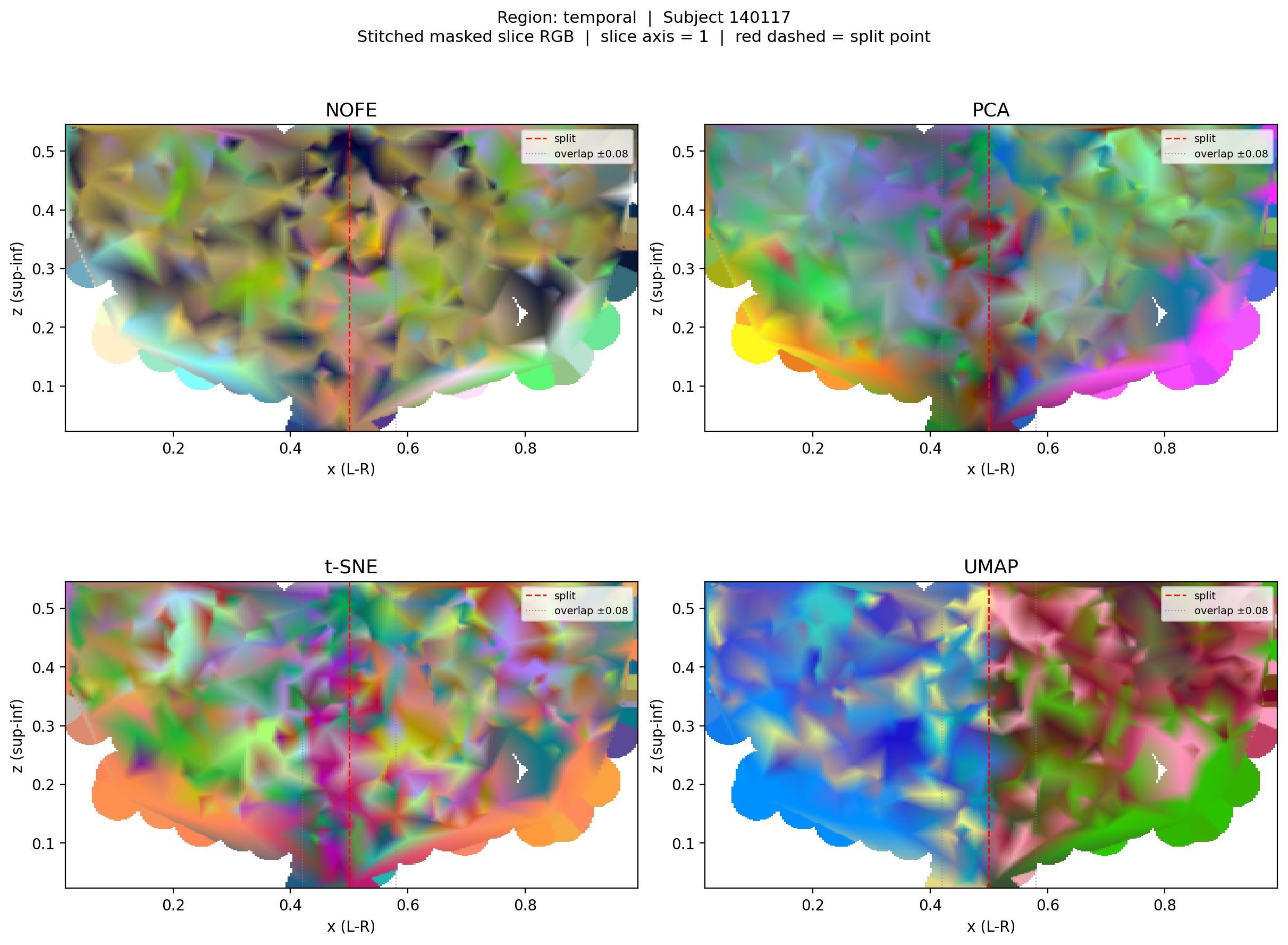}
    \caption{Patch gluing visualization in the temporal region.}
    \label{fig:patch_gluing_temporal}
\end{figure}
\section{Satellite Imaging} \label{app: satellite}
\begin{wrapfigure}{r}{0.4\textwidth}
\centering
\includegraphics[width=\linewidth, trim=0 0 0 40]{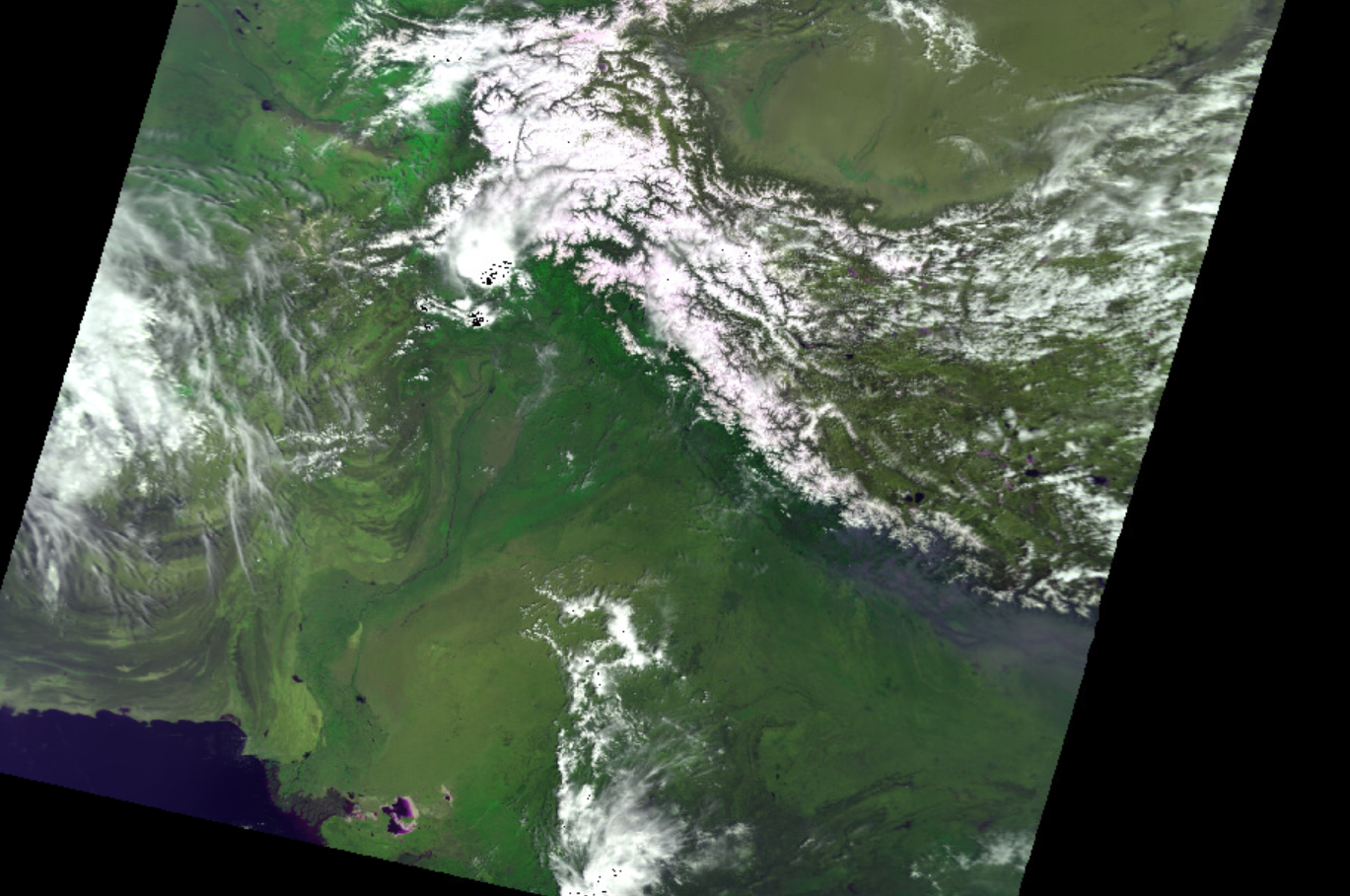}
\caption{MODIS image of the Himalaya from 2021-04-14.}
\label{fig:modis_images}
\vspace{-12pt}
\end{wrapfigure}
We trained NOFE on a time-series of MODIS (Moderate Resolution Imaging Spectroradiometer)  data, covering the Himalaya mountain region. MODIS is a sensor onboard the Terra and Aqua satellites and consists of  36 bands ranging in wavelength from 0.4 µm to 14.4 µm, with resolutions between 250 - 1000 m. The most common application of MODIS  is tracking large-scale, daily changes in land vegetation, ocean biological activity, and atmospheric properties to understand global environmental dynamics. 

The data is processed to top-of-the-atmosphere (TOA) reflectance, and projected to WGS 84, UTM 44N coordinates. All bands are resampled to 1 km ground sampling distance (see Figure~\ref{fig:modis_images} for an example image). 

For training we use images taken in 2017. As shown in Figure~\ref{fig:modis_images}, the images often do not cover the full region. To counter this, we only consider images with roughly 40~\% coverage or more and focus on a smaller subregion in the center. This leaves us with 351 images for training. The same way we select images from January 2018 for evaluation.

We evaluate on feature distance preservation using previously introduced metrics and 1000 points per sample. Results in Table \ref{tab:distance_preservation_satellite} are consistent with other datasets. NOFE and PCA show significantly better scores than t-SNE and UMAP. PCA leads on global metrics evaluating global structures, while NOFE has the best score for local stress.

Figure \ref{fig: modis_res} visualizes exemplary 3-dimensional embeddings from 2018-01-02 as RGB images. For the visualizations, we sampled 50,000 points. Since t-SNE computation for a dataset of this size takes up a lot of resources, we omit this method and focus on NOFE, PCA and UMAP. NOFE and PCA results seem to resolve structures very similarly, also revealing finer details, while UMAP appears to cluster larger regions and suppress smaller scaled structures.

\begin{figure}[h]
    \centering

    \begin{subfigure}{0.3\textwidth}
        \centering
        \includegraphics[width=\linewidth]{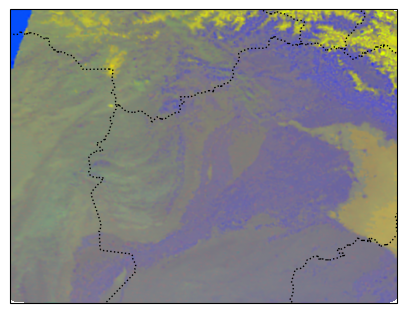}
        \caption{NOFE}
        \label{fig: modis_res nofe}
    \end{subfigure}
    \hfill
    \begin{subfigure}{0.3\textwidth}
        \centering
        \includegraphics[width=\linewidth]{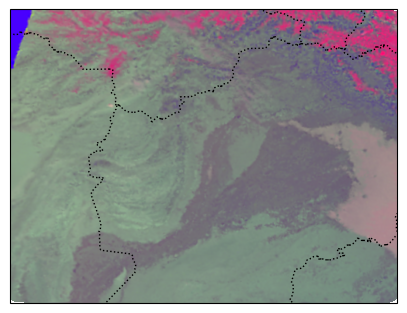}
        \caption{PCA}
        \label{fig: modis_res pca}
    \end{subfigure}
    \hfill    \begin{subfigure}{0.3\textwidth}
        \centering
        \includegraphics[width=\linewidth]{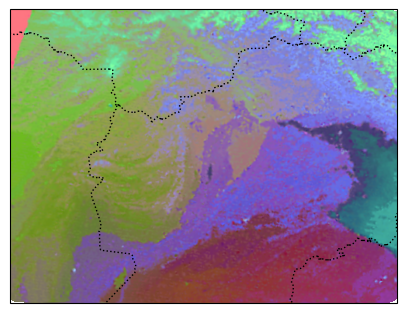}
        \caption{UMAP}
        \label{fig: modis_res umap}
    \end{subfigure}
    
    \caption{Dimensionality reduced MODIS data from 2018-01-02. $50{,}000$ points, each equipped with 36 measurements is reduced to 3-dimensional space using four different methods.}
    \label{fig: modis_res}
\end{figure}

\begin{table}[ht]
\centering
\caption{Distance preservation results for satellite images.}
\label{tab:distance_preservation_satellite}
\begin{tabular}{lccc}
\toprule
Method 
& Pairwise Corr. 
& Stress-1 
& Stress-local \\
\midrule
NOFE  & \underline{0.973} $\pm$ 0.017 & \underline{0.169} $\pm$ 0.045 & \textbf{0.231} $\pm$ 0.183 \\
PCA   & \textbf{0.992} $\pm$ 0.007 & \textbf{0.095} $\pm$ 0.028 & \underline{0.299} $\pm$ 0.263\\
UMAP  & 0.738 $\pm$ 0.143 & 6.029 $\pm$ 4.403 & 6.128 $\pm$ 5.602 \\
t-SNE & 0.768 $\pm$ 0.121 & 2.877 $\pm$ 2.251 & 1.809 $\pm$ 2.422 \\
\bottomrule
\end{tabular}
\end{table}
\section{Feature Correlation} \label{app: feature correlation}
The ERA5 data provides an interesting use case since the variables carry physical meaning. This provides ``ground truth'' in the sense that we may investigate how well dimension reduction correlates with each these original variables.

One way to approach this is to use PCA on the 3-dimensional embeddings to provide maps down to grayscale values (1-dimensional space). From this, the correlation coefficients between the embeddings and the original input features are calculated. 
The correlation results are summarized in Table \ref{tab:grayscale}.
\begin{table}[h]
  \caption{Correlation of grayscale image-values with original climate variables.}
  \label{tab:grayscale}
  \centering
  \begin{tabular}{lccccc}
    \toprule
    Variable & NOFE & PCA & t-SNE & UMAP \\
    \midrule
    Vertical velocity (w) & 0.419 & 0.124 & 0.074 & 0.083 \\
    V-component of wind (v) & 0.180 & 0.278 & 0.214 & 0.234 \\
    Geopotential (z) & 0.229 & \underline{0.800} & \textbf{0.892} & \textbf{0.866} \\
    Specific humidity (q) & \textbf{0.727} & 0.691 & 0.503 & 0.472 \\
    Relative humidity (r) & 0.650 & 0.205 & 0.291 & 0.286 \\
    Temperature (t) & 0.277 & \textbf{0.831} & \underline{0.887} & \underline{0.856} \\
    Divergence (d) & 0.163 & 0.060 & 0.038 & 0.027 \\
    Ozone mass mixing ratio (o3) & 0.579 & 0.335 & 0.118 & 0.140 \\
    Potential vorticity (pv) & 0.699 & 0.646 & 0.283 & 0.282 \\
    U-component of wind (u) & 0.110 & 0.304 & 0.274 & 0.294 \\
    Vorticity (vo) & \underline{0.702} & 0.589 & 0.181 & 0.188 \\
    \textbf{Mean} & 0.430 & 0.442 & 0.341 & 0.339 \\
    \bottomrule
  \end{tabular}\\
  *Largest coefficients are printed \textbf{bold}; second largest are \underline{underlined}.
\end{table}

We interpret these correlations with caution. But we notice that PCA, for instance, may better capture some of the variables connected to physical structure and energy \cite{IntroductionDynamicMeteorology} (geopotential, temperature), while NOFE may reflect better variables connected to how the atmosphere is shaped and moving (tracers, e.g. humidity, ozone).

\newpage
\end{document}